%% file: bmvc_final.tex
\PassOptionsToPackage{table}{xcolor}
\documentclass{bmvc2k}
\usepackage{amsmath,amssymb}
\usepackage{graphicx}
\usepackage{booktabs}
\usepackage{algorithm,algorithmic}
\usepackage{hyperref}
\usepackage{multirow}

\usepackage{amsthm}

\newtheorem{lemma}{Lemma}

\newtheorem{corollary}{Corollary}

\title{Temperature-Adaptive\\ Transformed Teacher Matching}

\author{Hiroaki Aizawa\inst{1}\orcidlink{0000-0002-6241-3973} \and
Yoshikazu Hayashi\inst{2}\orcidlink{0009-0000-6251-3950}}

\addauthor{Hiroaki Aizawa}{hiroaki-aizawa@hiroshima-u.ac.jp}{1}
\addauthor{Yoshikazu Hayashi}{hayashi@cv.info.gifu-u.ac.jp}{2}

\addinstitution{
Graduate School of Advanced Science and Engineering\\ Hiroshima University, Japan\\
}
\addinstitution{
Intelligent Production Technology Research \& Development Center for Aerospace \\
Gifu University, Japan\\
}

\runninghead{H.Aizawa and Y.Hayashi}{Temperature-Adaptive TTM}

\def\ie{\emph{i.e}\bmvaOneDot}

\begin{document}

\maketitle

\begin{abstract}
Temperature scaling is a core component of knowledge distillation, yet its role and effect are still not fully understood. Transformed Teacher Matching (TTM) clarifies the role of temperature scaling by applying it only to the teacher distribution and interpreting the resulting objective as standard distillation with an implicit R\'enyi entropy regularization on the student. However, TTM still relies on a fixed temperature and does not specify how the teacher-side temperature should be adapted for individual samples. In this paper, we introduce a sample-wise inverse-temperature update for TTM by locally minimizing the Kullback-Leibler divergence between the temperature-scaled teacher distribution and the student's prediction. We derive closed-form first and second derivatives with respect to the inverse temperature, and show that they can be expressed using variance and covariance statistics of centered teacher and student logits under the transformed teacher weighting. This yields an efficient curvature-aware update that requires one softmax evaluation and a constant number of class-wise weighted sums. Experiments on standard image classification distillation benchmarks show that our temperature adaptation generally improves TTM and WTTM, while remaining competitive with or outperforming prior temperature-adaptive distillation baselines.
\end{abstract}

\section{Introduction}
\label{sec:intro}

Knowledge distillation (KD) trains a student model by transferring knowledge from a teacher model, improving accuracy without increasing inference cost~\cite{kd}. This paradigm is particularly valuable when training resources or high-quality labeled data are limited, since it can deliver strong performance with a smaller model budget. In practice, however, the effectiveness of KD depends on several design choices that can be sensitive to the teacher, student, and dataset, and these choices often require careful tuning.

\input{figs/fig_overview}

A core component of standard KD is temperature scaling, which smooths predictive distributions to expose class-to-class relations, often referred to as dark knowledge. In the common formulation~\cite{kd}, temperature scaling is applied to both teacher and student outputs, and the student is trained to match the resulting softened distributions. This smoothing assigns meaningful probability mass and gradients to non-maximum classes, making it easier for the student to learn the teacher's relative class structure. Despite its empirical success, the role of temperature scaling in KD has been less clear, and the design choice of whether to apply temperature to the teacher, the student, or both has historically been ambiguous.

Transformed Teacher Matching (TTM)~\cite{ttm} resolves this ambiguity in the distillation objective by using temperature scaling only on the teacher distribution and matching the transformed teacher distribution to the student's temperature-free prediction. Beyond this objective-level clarification, TTM provides a principled interpretation of temperature in distillation. By reformulating the teacher-side temperature transformation via power transforms, it shows that the resulting objective combines distribution matching with an implicit R\'enyi entropy regularization on the student.

Temperature scaling is also widely used beyond distillation, including calibration~\cite{calibration}, out-of-distribution detection~\cite{odin}, semi- and self-supervised learning~\cite{simclr,dino,meanteacher,mixmatch}, and learning with label noise~\cite{dividemix}. In these settings, the temperature can substantially affect optimization behavior and final performance. In practice, however, selecting an effective temperature often depends on the particular teacher, student, and dataset, and it is commonly chosen through empirical tuning. While TTM clarifies the objective by applying temperature scaling only on the teacher side, it still relies on a fixed temperature and does not specify how the teacher-side temperature should vary across samples.

In this paper, we address this limitation by introducing a per-sample temperature adaptation method for TTM. Our approach optimizes the teacher-side inverse temperature for each sample by locally minimizing the Kullback-Leibler divergence between the transformed teacher distribution and the student's prediction (Fig.~\ref{fig:overview}). We derive closed-form first and second derivatives with respect to the inverse temperature, and show that they can be written as compact variance-covariance expressions of centered teacher and student logits under the transformed teacher weighting. These derivatives lead to an efficient curvature-aware update that adds negligible overhead and can be directly integrated into TTM and WTTM. As a result, the proposed method preserves the TTM framework while adapting the teacher target, and consequently the implicit R\'enyi entropy regularization, at the sample level. Experiments on standard image classification distillation benchmarks show that our temperature adaptation generally improves TTM and WTTM, while remaining competitive with or outperforming prior temperature-adaptive distillation baselines.

\section{Related Work}
\label{sec:related_work}
Knowledge distillation (KD) transfers knowledge from a teacher model to a student model by matching their predictive distributions, improving accuracy without increasing inference cost~\cite{kd}. Prior work has explored diverse transfer signals, including feature-based distillation~\cite{fitnet,at,vid}, relation and similarity matching methods~\cite{pkt,rkd,similaritykd}, and contrastive objectives for representation distillation~\cite{crd}. On the logits side, a large line of work improves distillation by refining target distributions or loss formulations~\cite{dkd,dist}, while self-distillation provides a complementary direction~\cite{bornagain}.

Temperature scaling is central to KD and is also widely used in related settings such as calibration~\cite{calibration}, out-of-distribution detection~\cite{odin}, and semi- or self-supervised learning~\cite{simclr,dino,meanteacher,mixmatch}. Recent KD methods adapt temperature by introducing curriculum temperature schedules~\cite{ctkd}, logit standardization~\cite{ls}, entropy-based adaptation~\cite{eakd}, and precomputed sample-wise temperatures based on pretrained-teacher difficulty~\cite{yang2025atd}. In contrast to temperature adaptation, Transformed Teacher Matching (TTM) applies temperature scaling only to the teacher distribution and shows that the resulting formulation can be viewed as standard distillation with an implicit R\'enyi entropy regularization on the student~\cite{ttm}. WTTM further extends TTM with sample-wise weighting to emphasize informative teacher signals~\cite{ttm}. Our work complements TTM and WTTM by adapting the teacher-side inverse temperature at the sample level using a curvature-aware update derived from closed-form first and second derivatives of the teacher-student mismatch, enabling temperature adaptation within the TTM framework.

\section{Preliminaries}
\label{sec:preliminaries}

\subsection{Setting and Notation}
\label{ssec:notation}
We consider $K$-class classification under a knowledge distillation setting. Given an input-label pair $(\mathbf{x},y)$ with $y\in\{1,\dots,K\}$, we train a student model $f_s(\cdot;\theta_s)$ using both the ground-truth label and the teacher's predictive distribution produced by a teacher model $f_t(\cdot;\theta_t)$. The teacher parameters $\theta_t$ are fixed during distillation, while the student parameters $\theta_s$ are optimized. Let the teacher and student models produce logits $\mathbf{z}_t=f_t(\mathbf{x};\theta_t)\in\mathbb{R}^K$ and $\mathbf{z}_s=f_s(\mathbf{x};\theta_s)\in\mathbb{R}^K$. The corresponding temperature-free predictive distributions are
\begin{equation}
p=\mathrm{softmax}(\mathbf{z}_t),\qquad
q=\mathrm{softmax}(\mathbf{z}_s),
\label{eq:pq}
\end{equation}
where $\bigl(\mathrm{softmax}(\mathbf{z})\bigr)_i=\exp(z_i)\big/\sum_{j=1}^K \exp(z_j)$. When the dependence on model parameters is important, we write $p_{\theta_t}=\mathrm{softmax}(f_t(\mathbf{x};\theta_t))$ and $q_{\theta_s}=\mathrm{softmax}(f_s(\mathbf{x};\theta_s))$; otherwise, we use  $p$ and $q$ for notational simplicity.

For distillation, we introduce a temperature parameter $T>0$ to control the sharpness (or smoothness) of the predictive distributions. Specifically, we define the temperature-scaled teacher and student distributions as
\begin{equation}
p_T=\mathrm{softmax}(\mathbf{z}_t/T),\qquad
q_T=\mathrm{softmax}(\mathbf{z}_s/T).
\label{eq:pTqT}
\end{equation}
We use $\mathcal{L}_{\mathrm{CE}}(\cdot,\cdot)$ for the cross-entropy loss and $D_\mathrm{KL}(\cdot\|\cdot)$ for the Kullback-Leibler divergence.

\subsection{Knowledge Distillation and Transformed Teacher Matching}
\label{ssec:kd_and_ttm}
\paragraph{Knowledge distillation (KD)~\cite{kd}.}
KD trains the student by combining supervised learning with distribution matching between temperature-scaled teacher and student predictions. A common formulation is
\begin{equation}
\mathcal{L}_{\mathrm{KD}}
=(1-\lambda)\,\mathcal{L}_{\mathrm{CE}}(y,q)
+\lambda T^2\,D_\mathrm{KL}(p_T\|q_T),
\end{equation}
where $\lambda\in[0,1]$ balances the supervised and distillation terms.

\paragraph{Transformed teacher matching (TTM)~\cite{ttm}.}
To clarify the role of temperature on the teacher side, TTM interprets temperature scaling as a \emph{power transform}. Given the temperature-free teacher prediction $p$, we define the power-transformed distribution
\begin{equation}
p^{(\gamma)}_i=\frac{p_i^\gamma}{\sum_{j=1}^K p_j^\gamma},\qquad \gamma>0.
\label{eq:power_transform}
\end{equation}
By setting $\gamma=1/T$, we obtain the equivalence $p_T=p^{(\gamma)}$ and we compute $p^{(\gamma)}$ directly from logits as
\begin{equation}
p^{(\gamma)}=\mathrm{softmax}(\gamma\,\mathbf{z}_t).
\label{eq:p_gamma_softmax}
\end{equation}

Building on the power-transform view, TTM makes the connection to the standard KD objective explicit. In particular, with the parameter correspondence $\gamma=1/T$ and $\beta=\frac{\lambda}{1-\lambda}T$, the TTM loss can be rewritten as
\begin{equation}
\mathcal{L}_{\mathrm{TTM}}
=\mathcal{L}_{\mathrm{CE}}(y,q)
+\beta\,D_\mathrm{KL}(p^{(\gamma)}\|q)
=\frac{1}{1-\lambda}\Bigl[\mathcal{L}_{\mathrm{KD}}-\lambda T(T-1)\,\mathcal{H}_{1/T}(q)\Bigr],
\label{eq:L_ttm}
\end{equation}
where $\mathcal{H}_{\alpha}(q)$ denotes the R\'enyi entropy of order $\alpha$~\cite{renyi}. This identity highlights that TTM is not only a distribution-matching objective but also introduces an implicit R\'enyi entropy regularization on the student prediction.

\paragraph{Weighted TTM (WTTM)~\cite{ttm}.}
WTTM further extends TTM by assigning a sample-wise weight to the distillation term based on the sharpness of the teacher target. Concretely, WTTM multiplies the distillation coefficient $\beta$ in Eq.~\eqref{eq:L_ttm} by a weight $U_{\gamma}(p)$ computed for each sample:
\begin{equation}
\mathcal{L}_{\mathrm{WTTM}}
=\mathcal{L}_{\mathrm{CE}}(y,q)
+\beta U_{\gamma}(p)D_\mathrm{KL}(p_T\|q),
\quad U_{\gamma}(p)=\sum_{j=1}^{K} p_j^{\gamma},
\label{eq:L_wttm}
\end{equation}
where $U_{\gamma}(p)$ is a power-sum of the teacher distribution that increases when the teacher target is smoother, thereby upweighting samples for which dark knowledge is more informative, while keeping the teacher-side temperature unchanged.

\section{Methodology}
\label{sec:methodology}

\subsection{Overview: Per-Sample Temperature Adaptation for TTM}
\label{ssec:overview}
We address the remaining question in TTM: \emph{how to adapt the teacher temperature}. Our goal is to enable sample-adaptive temperature scaling within the TTM framework. To this end, we optimize the inverse temperature $\gamma=1/T$ for each input under fixed logits. Specifically, we form the teacher distribution $p^{(\gamma)}=\mathrm{softmax}(\gamma\,\mathbf{z}_t)$ and the student prediction $q=\mathrm{softmax}(\mathbf{z}_s)$, and minimize the per-sample TTM discrepancy $D_\mathrm{KL}\!\bigl(p^{(\gamma)}\|q\bigr)$.

\paragraph{Local sensitivity and curvature of the KL discrepancy.}
To understand how the teacher temperature affects TTM, \ie, to quantify how the teacher-side inverse temperature $\gamma$ affects TTM at the sample level, we analyze the local behavior of $D_\mathrm{KL}\!\bigl(p^{(\gamma)}\|q\bigr)$ under small perturbations $\Delta\gamma$. This discrepancy is the teacher-student matching term in TTM, and its value can vary substantially across samples as the student prediction changes during training. Therefore, its local variation provides a natural signal for how the teacher-side transformation should be adjusted for each sample. Concretely, we approximate the change in the discrepancy by a second-order Taylor expansion around the current $\gamma$:
\begin{equation}
D_\mathrm{KL}\!\bigl(p^{(\gamma+\Delta\gamma)}\|q\bigr)
\approx
\underbrace{D_\mathrm{KL}\!\bigl(p^{(\gamma)}\|q\bigr)}_{\text{constant}}
+\underbrace{\frac{d}{d\gamma}D_\mathrm{KL}\bigl(p^{(\gamma)}\|q\bigr)\Delta\gamma}_{\text{local sensitivity}}
+\underbrace{\frac{1}{2}\frac{d^2}{d\gamma^2}D_\mathrm{KL}\bigl(p^{(\gamma)}\|q\bigr)(\Delta\gamma)^2}_{\text{local curvature}},
\label{eq:taylor_KL}
\end{equation}
which characterizes both the local sensitivity (first derivative) and curvature (second derivative) with respect to $\gamma$. This quadratic model provides the local surrogate used below, and its derivatives are derived in Sec.~\ref{ssec:closed_form}.

\paragraph{Minimizing the local quadratic approximation.}
The local quadratic model above turns the sensitivity analysis into a practical update rule for the teacher-side inverse temperature. Standard TTM optimizes the student parameters for a fixed teacher-side temperature. In contrast, our method treats the teacher-side inverse temperature as a sample-wise auxiliary variable and refines it for the current student. Conceptually, this gives a double-minimization view~\cite{markovkd,unditillable,cmi} of the TTM objective:
\begin{equation}
\min_{\theta_s}\min_{\gamma>0}
\left[
\mathcal{L}_{\mathrm{CE}}(y,q_{\theta_s})
+\beta D_\mathrm{KL}\!\bigl(p^{(\gamma)}\|q_{\theta_s}\bigr)
\right].
\label{eq:double_min_view}
\end{equation}
This expression is used only to motivate the update direction; we do not solve the inner problem to convergence. In practice, after warm-up, we keep the current teacher and student logits fixed and update $\gamma$ periodically every $I_{\gamma}$ epochs using one local step based on the quadratic approximation in Eq.~\eqref{eq:taylor_KL}. Dropping the constant term that does not affect the update, we minimize the quadratic surrogate
\begin{equation}
\Delta\gamma^\star
\triangleq
\arg\min_{\Delta\gamma\in\mathbb{R}}\;
\left[
\frac{d}{d\gamma}D_\mathrm{KL}\!\bigl(p^{(\gamma)}\|q\bigr)\,\Delta\gamma
+\frac{1}{2}\frac{d^2}{d\gamma^2}D_\mathrm{KL}\!\bigl(p^{(\gamma)}\|q\bigr)\,(\Delta\gamma)^2
\right].
\label{eq:psi_dgamma}
\end{equation}

\paragraph{Curvature-aware update.}
Because the surrogate in Eq.~\eqref{eq:psi_dgamma} is quadratic in $\Delta\gamma$, its minimizer has a closed form:
\begin{equation}
\Delta\gamma^\star
=
-\frac{\frac{d}{d\gamma}D_\mathrm{KL}\!\bigl(p^{(\gamma)}\|q\bigr)}
{\frac{d^2}{d\gamma^2}D_\mathrm{KL}\!\bigl(p^{(\gamma)}\|q\bigr)}.
\label{eq:delta_gamma_star}
\end{equation}
We update the inverse temperature using a step size $\eta>0$, add a small constant to the denominator for numerical stability, and clip the result to a valid range:
\begin{equation}
\gamma \leftarrow
\mathrm{clip}\!\left(
\gamma
-\eta\,
\frac{
\frac{d}{d\gamma}D_\mathrm{KL}\!\bigl(p^{(\gamma)}\|q\bigr)
}{
\frac{d^2}{d\gamma^2}D_\mathrm{KL}\!\bigl(p^{(\gamma)}\|q\bigr)+\delta
},
\gamma_{\min},\gamma_{\max}
\right),
\label{eq:gamma_update}
\end{equation}
where $\delta>0$ is a small constant for numerical stability, $\mathrm{clip}(x,a,b)=\min(\max(x,a),b)$, and $\gamma_{\min},\gamma_{\max}$ are the lower and upper bounds that restrict $\gamma$ to a valid range. In Sec.~\ref{ssec:closed_form}, we show that these derivatives can be written in compact variance-covariance forms, enabling an efficient per-sample update.

\paragraph{Temperature-adaptive extensions of TTM and WTTM.}
The update in Eq.~\eqref{eq:gamma_update} can be directly incorporated into TTM (Eq.~\eqref{eq:L_ttm}) and WTTM (Eq.~\eqref{eq:L_wttm}) by replacing the fixed inverse temperature with a sample-wise value updated during training. We denote the resulting methods as \emph{Temperature-Adaptive TTM (TA-TTM)} and \emph{Temperature-Adaptive WTTM (TA-WTTM)}, respectively. In both cases, we keep the original loss definitions and introduce only the additional $\gamma$ update computed with \texttt{stop-gradient} logits. We maintain $\gamma$ as a per-sample state throughout training and update it for observed samples during scheduled update epochs. For TA-WTTM, one could alternatively update $\gamma$ by minimizing the full weighted WTTM term, including the sample weight $U_{\gamma}(p)$. However, this couples temperature adaptation with sample weighting, and our ablation in Sec.~\ref{ssec:ablation_tawttm_full} shows that this full-objective variant underperforms the decoupled update in Eq.~\eqref{eq:gamma_update}. We therefore use the decoupled update as TA-WTTM in the rest of the paper. As a result, TA-TTM adapts the teacher-side temperature within the TTM objective, and TA-WTTM augments WTTM with the same temperature adaptation while retaining its original sample weighting scheme.

\subsection{Closed-Form Derivatives}
\label{ssec:closed_form}
The curvature-aware update in Sec.~\ref{ssec:overview} requires the first and second derivatives of the per-sample discrepancy $D_\mathrm{KL}\!\bigl(p^{(\gamma)}\|q\bigr)$ with respect to the teacher-side inverse temperature $\gamma$. Notably, both derivatives can be written using low-order moments of teacher and student logits under the transformed teacher weighting.

We recall $p^{(\gamma)}=\mathrm{softmax}(\gamma\mathbf{z}_t)$ and $q=\mathrm{softmax}(\mathbf{z}_s)$, and we differentiate with respect to $\gamma$ with stop-gradient logits. Let $m_t \triangleq \mathbb{E}_{p^{(\gamma)}}[z_t]$ and $m_s \triangleq \mathbb{E}_{p^{(\gamma)}}[z_s]$, where $\mathbb{E}_{p^{(\gamma)}}[f]\triangleq \sum_{i=1}^K p^{(\gamma)}_i f_i$ denotes the class-wise expectation. We write centered logits as $\hat{z}_t \triangleq z_t-m_t$ and $\hat{z}_s \triangleq z_s-m_s$ (element-wise). With these definitions, the first derivative can be written in a compact variance-covariance form under $p^{(\gamma)}$, while the second derivative yields a curvature expression that also remains a simple class-wise weighted sum. Concretely,
\begin{equation}
\begin{aligned}
\frac{d}{d\gamma}D_\mathrm{KL}\!\bigl(p^{(\gamma)}\|q\bigr)
&=
\underbrace{\gamma\,\mathrm{Var}_{p^{(\gamma)}}(z_t)}_{\text{\shortstack{teacher dispersion}}}
-
\underbrace{\mathrm{Cov}_{p^{(\gamma)}}(z_s,z_t)}_{\text{\shortstack{teacher-student\\alignment}}},\\
\frac{d^2}{d\gamma^2}D_\mathrm{KL}\!\bigl(p^{(\gamma)}\|q\bigr)
&=
\underbrace{\mathrm{Var}_{p^{(\gamma)}}(z_t)}_{\text{\shortstack{base curvature}}}
-
\underbrace{\mathrm{Cov}_{p^{(\gamma)}}(z_s,\hat{z}_t^{\,2})}_{\text{\shortstack{student-teacher\\deviation correlation}}}
+
\underbrace{\gamma\,\mathbb{E}_{p^{(\gamma)}}[\hat{z}_t^{\,3}]}_{\text{\shortstack{teacher skewness}}},
\end{aligned}
\label{eq:kl_derivatives}
\end{equation}
where $\mathrm{Var}_{p^{(\gamma)}}(\cdot)$ and $\mathrm{Cov}_{p^{(\gamma)}}(\cdot,\cdot)$ are taken with respect to the weighting $p^{(\gamma)}$. We provide the full derivations in the supplementary material.

\paragraph{Interpretation of first derivative.}
Eq.~\eqref{eq:kl_derivatives} shows that temperature adaptation is governed by simple statistics of teacher and student logits under the transformed teacher weighting $p^{(\gamma)}$. In the first derivative, the update direction is determined by the balance between teacher dispersion and teacher-student alignment. The term $\gamma\,\mathrm{Var}_{p^{(\gamma)}}(z_t)$ grows when the transformed teacher distribution emphasizes classes with widely spread teacher logits, meaning that changing $\gamma$ strongly affects the smoothness of $p^{(\gamma)}$. In contrast, $\mathrm{Cov}_{p^{(\gamma)}}(z_s,z_t)$ increases when the student logits align with the teacher logits on those emphasized classes, which counteracts the push to change $\gamma$.

\paragraph{Interpretation of second derivative.}
The second derivative quantifies the local curvature that determines how aggressive the update should be. The base curvature $\mathrm{Var}_{p^{(\gamma)}}(z_t)$ provides the dominant scale, while the student-teacher deviation correlation term $\mathrm{Cov}_{p^{(\gamma)}}(z_s,\hat{z}_t^{\,2})$ adjusts this curvature depending on how the student weights classes where the teacher deviates strongly from its weighted mean. Finally, the teacher skewness term $\gamma\,\mathbb{E}_{p^{(\gamma)}}[\hat{z}_t^{\,3}]$ captures asymmetry of teacher logits under $p^{(\gamma)}$.
\input{algo/algo_update}

\paragraph{Connection to implicit R\'enyi entropy regularization.}
This sample-wise adaptation also has an interpretation through the implicit R\'enyi entropy regularization in TTM (Eq.~\eqref{eq:L_ttm}). Since the teacher-side temperature determines the transformed teacher distribution, it also controls how strongly the student is implicitly encouraged to adjust its predictive entropy under the TTM objective. Therefore, updating $\gamma$ per sample not only changes the teacher target but also modulates the strength of the implicit R\'enyi entropy regularization across samples, allowing the regularization effect to adapt to sample difficulty and teacher-student mismatch.

\paragraph{Curriculum learning perspective.}
Our per-sample temperature adaptation can also be interpreted as an implicit curriculum. The inverse temperature $\gamma$ controls the sharpness of the transformed teacher distribution, which affects how difficult the distillation target is for the student to match. By updating $\gamma$ to reduce the teacher-student discrepancy, the method can assign softer targets to samples with large mismatch and sharper targets to samples where the student is more aligned with the teacher. In this sense, the teacher signal is adjusted at the sample level according to the current learning state, resembling curriculum or self-paced learning~\cite{cl,spl,sspl,flatcl}. Through Eq.~\eqref{eq:L_ttm}, this view is consistent with sample-wise modulation of the implicit R\'enyi entropy regularization, since changing $\gamma$ simultaneously changes the strength of the regularization effect induced by TTM.

\subsection{Algorithm and Implementation}
\label{ssec:algo}
Algorithm~\ref{alg:gamma} summarizes the per-sample inverse-temperature update. Each scheduled update requires one softmax over $K$ classes and a constant number of $O(K)$ weighted sums, yielding negligible overhead. We initialize $\gamma$ from the temperature used in TTM, keep it fixed during a short warm-up, and then update it every $I_{\gamma}$ epochs, reusing the stored values between updates. For a training set of $N$ samples, the per-sample state consists of $N$ 32-bit floating-point (FP32) scalars, requiring $4N$ bytes (0.2 MB for CIFAR100 and 5.1 MB for ImageNet1k). The state is indexed by dataset sample IDs, making it compatible with shuffling and standard data augmentation. The interval $I_{\gamma}$ trades update frequency against noise and computation. For stability, we add $\delta$ to the curvature denominator and clip $\gamma$ to $[\gamma_{\min},\gamma_{\max}]$; these safeguards handle small or negative estimated curvature and keep the stored inverse temperatures within the valid range.

\input{tabs/tabs_cifar100_same}
\input{tabs/tabs_cifar100_diff}

\section{Evaluation}
\label{sec:evaluation}

\subsection{Evaluation Setting}
\paragraph{Benchmarks and comparison methods.}
We evaluate our method on CIFAR100~\cite{cifar} and ImageNet1k~\cite{imagenet}, which are standard image classification benchmarks for distillation. CIFAR100 contains 100 classes of $32\times 32$ images, and ImageNet1k contains 1,000 classes of high-resolution natural images. We use widely adopted teacher-student architectures and report top-1 accuracy on the CIFAR100 test split and the ImageNet1k validation split. We compare against TTM~\cite{ttm}, WTTM~\cite{ttm}, strong KD baselines FitNet~\cite{fitnet}, AT~\cite{at}, VID~\cite{vid}, RKD~\cite{rkd}, PKT~\cite{pkt}, CRD~\cite{crd}, DIST~\cite{dist}, and DKD~\cite{dkd}, temperature-adaptive KD methods LS~\cite{ls} and EA-KD~\cite{eakd}, and the curriculum-based KD method CTKD~\cite{ctkd}.

\paragraph{Training and evaluation protocol.}
We follow the training and evaluation protocols of prior distillation work~\cite{crd} to ensure consistent comparisons. Since our method extends TTM and WTTM with temperature adaptation, we reproduce TTM and WTTM using the official implementation\footnote{https://github.com/zkxufo/TTM} under the protocol described in the original paper. For our method, all hyperparameters follow the corresponding TTM or WTTM configuration, except for the step size introduced for temperature adaptation. For the reproduced baselines and proposed methods in the main CIFAR100 experiments, we report the mean and standard deviation over five runs with different random seeds. Full details are provided in the supplementary material.

\subsection{Comparison Results}
\paragraph{CIFAR100 results.}
Tables~\ref{tab:cifar100_same} and~\ref{tab:std_diff} summarize results on CIFAR100 under same-architecture and cross-architecture distillation, respectively. TA-TTM is generally competitive with or better than the corresponding TTM baseline, while TA-WTTM consistently strengthens WTTM. The trend holds in both architecture settings, indicating that per-sample temperature adaptation is complementary to teacher-side temperature matching and remains useful when the teacher-student gap becomes larger.
TTM and WTTM already use carefully tuned $\beta$ and $\gamma$ configurations, so the improvements obtained by changing only the teacher-side temperature adaptation should be interpreted as modest refinements over strong baselines.

\paragraph{ImageNet1k results.}
Table~\ref{tab:ImageNet_main} reports results on ImageNet1k for the ResNet-34 teacher and ResNet-18 student pair. TA-TTM improves over the reproduced TTM baseline, and TA-WTTM further improves over WTTM while remaining competitive with the temperature-adaptive KD baseline EA-KD. These results indicate that the proposed sample-wise temperature adaptation remains effective at ImageNet1k scale and that its benefit is preserved when combined with the WTTM sample-weighting scheme.

\input{tabs/tabs_imagenet}
\input{tabs/tabs_cifar100_same_noce}
\paragraph{Distillation without cross-entropy.}
Table~\ref{tab:cifar100_same_noce} evaluates whether the proposed temperature adaptation remains effective when the supervised cross-entropy term is removed from the TTM-style objectives on representative pairs, which reflects recent practical settings where ground-truth labels are not available and learning must rely on teacher-provided supervision. Both TA-TTM and TA-WTTM retain their advantage over the corresponding non-adaptive baselines. These results indicate that the gains are not solely due to interaction with the supervised cross-entropy term; the adaptive teacher-side transform also strengthens the distillation signal itself.

\input{tabs/tabs_cifar100_vit}
\paragraph{Distillation from transformer-based teachers.}
Table~\ref{tab:cifar100_transformer} evaluates distillation performance from transformer-based teachers (ViT-S~\cite{vit}, Swin-T~\cite{swin}, and Mixer-B/16~\cite{mixer}) to a CNN-based student (ResNet18~\cite{resnet}). TTM and WTTM are strong baselines in this setting, and our adaptive variants further improve them. This suggests that per-sample temperature adaptation is also useful when transferring knowledge across model families, where the teacher-student gap is pronounced.

\subsection{Analysis of Temperature Dynamics}
\input{figs/fig_temperature_diagnostics_tattm}
\input{figs/fig_temperature_diagnostics_tawttm}
\paragraph{Adaptation behavior of the average temperature.}
We analyze the training dynamics of the temperature. Figs.~\ref{fig:temperature-diagnostics-tattm}(a) and~\ref{fig:temperature-diagnostics-tawttm}(a) show that the mean inverse temperature $\gamma$ evolves in a structured manner rather than converging to a constant. For both TA-TTM and TA-WTTM, $\gamma$ decreases from its initialization and then stabilizes after mid training, indicating that the effective temperature $T=1/\gamma$ increases and the teacher targets become softer over time. The red dashed curves plot the mixing coefficient $\lambda$ implied by the learned $\gamma$ under a fixed $\beta$, and their evolution suggests that the balance between the distillation term and the supervised cross-entropy term changes adaptively as a consequence of temperature adaptation. In addition, the standard deviation of $\gamma$ tends to increase, suggesting that sample-wise temperatures become more diverse as training progresses. Corresponding cross-architecture diagnostics are provided in the supplementary material.

\paragraph{Stage-wise temperature distribution.}
Figs.~\ref{fig:temperature-diagnostics-tattm}(b) and~\ref{fig:temperature-diagnostics-tawttm}(b) visualize how the inverse temperature $\gamma$ is distributed over training samples at three stages of training. At epoch 100, the distributions are sharply peaked for both methods, indicating that most samples share a similar temperature early in training. By epoch 200, the histograms become noticeably broader and their modes shift to smaller $\gamma$ values, which corresponds to using a larger effective temperature $T=1/\gamma$ and thus softer teacher targets on average. This widening suggests that the proposed update progressively differentiates samples and assigns a wider range of temperatures rather than converging to a single global value. From epoch 200 to epoch 240, the distributions change more mildly, implying that the sample-wise temperatures stabilize in the late stage.

\paragraph{Correlation with sample difficulty.}
Figs.~\ref{fig:temperature-diagnostics-tattm}(c) and~\ref{fig:temperature-diagnostics-tawttm}(c) examine how the optimized inverse temperature $\gamma$ relates to sample difficulty, measured by the per-sample cross-entropy loss, which is widely used as a difficulty proxy in curriculum and self-paced learning~\cite{spl}. Across both methods and training stages, we observe a clear negative correlation: samples with larger cross-entropy loss tend to be assigned smaller $\gamma$. Since smaller $\gamma$ corresponds to a larger effective temperature $T=1/\gamma$, the proposed adaptation softens the teacher target for harder samples while keeping it sharper for easier ones. This relationship remains pronounced throughout training, as reflected by the consistent downward trends in the point clouds. Overall, these results support a curriculum-like behavior in which the supervision is adjusted according to the current sample difficulty and teacher-student mismatch.

\subsection{Ablation Results}
\input{tabs/tabs_temperature_design_ablation}
\input{tabs/tabs_cifar100_wttm_ablation_short}

\paragraph{Temperature granularity.}
Table~\ref{tab:temperature_design_ablation}(a) compares global, class-wise, and sample-wise inverse temperatures on CIFAR100 for WRN-40-2$\rightarrow$WRN-16-2 using a common $\eta=0.001$. For both methods, the sample-wise variant achieves the highest top-1 best accuracy, outperforming the global and class-wise alternatives. This representative ablation supports adapting the teacher transform at the sample level, where teacher-student mismatch can vary even within the same class.

\paragraph{Temperature optimization rule.}
Table~\ref{tab:temperature_design_ablation}(b) fixes sample-wise $\gamma$ and compares trainable, first-order, and Newton updates over five runs using method-specific settings. The Newton update performs best for both TA-TTM and TA-WTTM, improving over directly training $\gamma$ and updating it with only the first derivative. On this representative pair, the result supports using the closed-form curvature rather than treating the inverse temperature as a generic trainable state.

\paragraph{Full-objective temperature update.}
\label{ssec:ablation_tawttm_full}
Table~\ref{tab:cifar100_wttm_ablation} compares two ways of updating the sample-wise inverse temperature in TA-WTTM on representative pairs. The decoupled variant, used as our default, updates $\gamma$ by minimizing the teacher-student discrepancy $D_\mathrm{KL}(p^{(\gamma)}\|q)$, while retaining the WTTM sample weight only in the training loss. The full-objective variant instead updates $\gamma$ using the weighted WTTM objective, thereby coupling the temperature update with the sample-weighting term $U_{\gamma}(p)$. TA-WTTM outperforms the full-objective variant in both same- and cross-architecture settings, indicating that separating temperature adaptation from sample weighting provides a more stable update for $\gamma$.

\input{figs/fig_hyperparameter_sensitivity}

\paragraph{Sensitivity to hyperparameters.}
Fig.~\ref{fig:hyperparameter-sensitivity} summarizes the sensitivity to the update rate $\eta$, warm-up length, and update interval. The results are relatively stable across warm-up lengths and update intervals, indicating that the method does not rely on a specific start epoch or very frequent temperature updates. The update rate has a clearer effect: overly conservative updates can under-adapt the temperature, while a moderate rate gives better accuracy. Overall, the proposed update is robust to scheduling choices, with $\eta$ being the main hyperparameter to tune.

\section{Conclusion}
We addressed how to adapt the teacher temperature in Transformed Teacher Matching (TTM) and proposed a per-sample update of the teacher-side inverse temperature within the TTM framework. Using closed-form first and second derivatives, expressed through variance-covariance statistics of centered teacher and student logits under the transformed teacher weighting, we derived an efficient curvature-aware update with negligible overhead. Experiments on standard image classification distillation benchmarks show that TA-TTM and TA-WTTM generally improve the reproduced TTM and WTTM baselines across multiple CIFAR100 settings, and also outperform the compared temperature-adaptive baseline in the ImageNet1k setting. Our analysis further reveals structured temperature dynamics and a negative correlation between $\gamma$ and per-sample difficulty, suggesting softer teacher targets for harder samples and a curriculum-like modulation of the implicit R\'enyi entropy regularization in TTM.

\bibliography{main}

\newpage
\appendix

\section{Derivations for Per-Sample Temperature Adaptation}
\label{app:deriv}

\subsection{Sample-wise optimization of the teacher temperature}
\label{app:ssec:setup}
We interpret the teacher temperature $T$ in TTM via $\gamma=1/T$ and optimize $\gamma$ per sample. Let the teacher-side transformed distribution and the student prediction be
\begin{equation}
p^{(\gamma)}=\mathrm{softmax}(\gamma\,\mathbf{z}_t),\qquad
q=\mathrm{softmax}(\mathbf{z}_s),\qquad \gamma>0.
\label{app:eq:def_pq}
\end{equation}
For a fixed pair $(\mathbf{z}_t,\mathbf{z}_s)$, we minimize the TTM discrepancy
\begin{equation}
D_\mathrm{KL}\!\bigl(p^{(\gamma)}\|q\bigr)
=
\sum_{i=1}^K p^{(\gamma)}_i\log\frac{p^{(\gamma)}_i}{q_i}.
\label{app:eq:def_KL}
\end{equation}

\paragraph{Remark (stop-gradient).}
In the following derivations, $\mathbf{z}_t$ and $\mathbf{z}_s$ (hence $q$) are treated as constants when differentiating with respect to $\gamma$.
In implementation, this corresponds to computing the $\gamma$-update with stop-gradient on logits.

\subsection{Log-sum-exp identities and an expectation derivative lemma}
\label{app:ssec:identities}

We start from the $\textit{log-sum-exp}$ form of the softmax:
\begin{equation}
p^{(\gamma)}_i
=
\frac{\exp(\gamma z_{t,i})}{\sum_{j=1}^K \exp(\gamma z_{t,j})}
=
\exp\left(\gamma z_{t,i}-\log\sum_{j=1}^K \exp(\gamma z_{t,j})\right)
\label{app:eq:p_logsumexp}
\end{equation}
and
\begin{equation}
\log p^{(\gamma)}_i
=
\gamma z_{t,i}-\log\sum_{j=1}^K \exp(\gamma z_{t,j}).
\label{app:eq:logp_logsumexp}
\end{equation}
Differentiating $\log p^{(\gamma)}_i$ with respect to $\gamma$ yields
\begin{align}
\frac{\partial}{\partial\gamma}\log p^{(\gamma)}_i
&=
z_{t,i}
-\frac{\sum_{j=1}^K z_{t,j}\exp(\gamma z_{t,j})}{\sum_{j=1}^K \exp(\gamma z_{t,j})}
=
z_{t,i}-\sum_{j=1}^K p^{(\gamma)}_j z_{t,j}.
\label{app:eq:dlogp_raw}
\end{align}
Define the $p^{(\gamma)}$-weighted mean of teacher logits
\begin{equation}
m_t \triangleq \sum_{j=1}^K p^{(\gamma)}_j z_{t,j}=\mathbb{E}_{p^{(\gamma)}}[z_t].
\label{app:eq:mt}
\end{equation}
Then Eq.~\eqref{app:eq:dlogp_raw} becomes
\begin{equation}
\frac{\partial}{\partial\gamma}\log p^{(\gamma)}_i=z_{t,i} - m_t.
\label{app:eq:dlogp}
\end{equation}

If the teacher logit are normalized as centered teacher logit $\hat z_{t,i}\triangleq z_{t,i}-m_t$, then Eq.~\eqref{app:eq:dlogp_raw} is
\begin{equation}
\frac{\partial}{\partial\gamma}\log p^{(\gamma)}_i=\hat z_{t,i}.
\label{app:eq:dlogp_centered}
\end{equation}

For the derivative $\frac{\partial}{\partial\gamma}p^{(\gamma)}_i$, we obtain it by the chain rule and \textit{log-sum-exp} form:
\begin{equation}
\begin{aligned}
\frac{\partial}{\partial\gamma}p^{(\gamma)}_i
&= \frac{\partial}{\partial\gamma} \exp\left(\gamma z_{t,i}-\log\sum_{j=1}^K \exp(\gamma z_{t,j})\right) \\
&=p^{(\gamma)}_i \cdot \frac{\partial}{\partial\gamma} \left(\gamma z_{t,i}-\log\sum_{j=1}^K \exp(\gamma z_{t,j})\right) \\
&=p^{(\gamma)}_i \cdot \frac{\partial}{\partial\gamma} \log p^{(\gamma)}_i \\
&=p^{(\gamma)}_i\,\hat z_{t,i}.
\label{app:eq:dp}
\end{aligned}
\end{equation}

\begin{lemma}
For any vector $f=(f_i)_{i=1}^K$ independent of $\gamma$, define the class-wise expectation $\mathbb{E}_{p^{(\gamma)}}[f]\triangleq \sum_{i=1}^K p^{(\gamma)}_i f_i$. Then
\begin{equation}
\frac{d}{d\gamma}\mathbb{E}_{p^{(\gamma)}}[f]
=
\mathbb{E}_{p^{(\gamma)}}[f\,\hat z_t]
=
\mathrm{Cov}_{p^{(\gamma)}}(f,z_t),
\label{app:eq:lemma_expect}
\end{equation}
where products are element-wise and $\mathrm{Cov}_{p^{(\gamma)}}(f,z_t)$ is the covariance under the weighting $p^{(\gamma)}$.
\end{lemma}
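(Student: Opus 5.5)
The plan is to differentiate the finite sum $\mathbb{E}_{p^{(\gamma)}}[f]=\sum_{i=1}^K p^{(\gamma)}_i f_i$ term by term. Since $f$ does not depend on $\gamma$, linearity gives
\begin{equation*}
\frac{d}{d\gamma}\mathbb{E}_{p^{(\gamma)}}[f]=\sum_{i=1}^K f_i\,\frac{\partial}{\partial\gamma}p^{(\gamma)}_i .
\end{equation*}
Into this I will substitute the derivative of the transformed teacher distribution from Eq.~\eqref{app:eq:dp}, namely $\frac{\partial}{\partial\gamma}p^{(\gamma)}_i=p^{(\gamma)}_i\hat z_{t,i}$. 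This immediately yields
\begin{equation*}
\frac{d}{d\gamma}\mathbb{E}_{p^{(\gamma)}}[f]=\sum_{i=1}^K p^{(\gamma)}_i f_i\hat z_{t,i}=\mathbb{E}_{p^{(\gamma)}}[f\,\hat z_t],
\end{equation*}
which is the first equality.

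For the second equality, I will expand $\hat z_{t,i}=z_{t,i}-m_t$ with $m_t=\mathbb{E}_{p^{(\gamma)}}[z_t]$ from Eq.~\eqref{app:eq:mt}. This gives $\mathbb{E}_{p^{(\gamma)}}[f z_t]-m_t\,\mathbb{E}_{p^{(\gamma)}}[f]$, using that $m_t$ is a scalar that can be pulled out of the weighted sum. The result $\mathbb{E}_{p^{(\gamma)}}[fz_t]-\mathbb{E}_{p^{(\gamma)}}[f]\,\mathbb{E}_{p^{(\gamma)}}[z_t]$ is exactly the definition of $\mathrm{Cov}_{p^{(\gamma)}}(f,z_t)$. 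Equivalently, I can insert the centering of $f$: since $\mathbb{E}_{p^{(\gamma)}}[\hat z_t]=\sum_i p^{(\gamma)}_i z_{t,i}-m_t\sum_i p^{(\gamma)}_i=0$ (using $\sum_i p^{(\gamma)}_i=1$), we have $\mathbb{E}_{p^{(\gamma)}}[f\hat z_t]=\mathbb{E}_{p^{(\gamma)}}[(f-\mathbb{E}_{p^{(\gamma)}}[f])\hat z_t]$, which is the symmetric centered form of the covariance.

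There is no real obstacle. The only point needing care is that $m_t$ itself depends on $\gamma$. This dependence is harmless here, because we differentiate $p^{(\gamma)}_i$ only through the already-established identity Eq.~\eqref{app:eq:dp}, and we never differentiate $\hat z_t$ itself. Differentiability is automatic, since $p^{(\gamma)}$ is a smooth function of $\gamma$ and the sum is finite.
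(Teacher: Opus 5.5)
Your proposal is correct and matches the paper's proof: you differentiate the finite sum termwise using $\frac{\partial}{\partial\gamma}p^{(\gamma)}_i=p^{(\gamma)}_i\hat z_{t,i}$ to get $\mathbb{E}_{p^{(\gamma)}}[f\,\hat z_t]$. You then expand $\hat z_t=z_t-m_t$ and pull out the scalar $m_t$ to recover $\mathrm{Cov}_{p^{(\gamma)}}(f,z_t)$, exactly as the paper does; your extra centered-form remark is a harmless addition.
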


\begin{proof}
Using Eq.~\eqref{app:eq:dp},
\begin{equation}
\begin{aligned}
\frac{d}{d\gamma}\mathbb{E}_{p^{(\gamma)}}[f]
&=
\sum_i \frac{d p^{(\gamma)}_i}{d\gamma}\,f_i \\
&=
\sum_i p^{(\gamma)}_i \hat z_{t,i} f_i \\
&=
\mathbb{E}_{p^{(\gamma)}}[f\,\hat z_t].
\end{aligned}
\end{equation}
Then we have
\begin{equation}
\begin{aligned}
\frac{d}{d\gamma}\mathbb{E}_{p^{(\gamma)}}[f]
&=
\sum_i p^{(\gamma)}_i \hat z_{t,i} f_i \\
&=
\sum_i p^{(\gamma)}_i (z_{t,i} -m_t) f_i \\
&=
\sum_i p^{(\gamma)}_i z_{t,i} f_i - \sum_i p^{(\gamma)}_i m_t f_i \\
&=
\sum_i p^{(\gamma)}_i z_{t,i} f_i - m_t \sum_i p^{(\gamma)}_i f_i \\
&=
\mathbb{E}_{p^{(\gamma)}}[f\,z_t] - \mathbb{E}_{p^{(\gamma)}}[z_t]\mathbb{E}_{p^{(\gamma)}}[f] \\
&=
\mathrm{Cov}_{p^{(\gamma)}}(f,z_t).
\end{aligned}
\end{equation} 
\end{proof}

\begin{corollary}
Taking $f=z_t$ in Eq.~\eqref{app:eq:lemma_expect} gives
\begin{equation}
\frac{d}{d\gamma}\mathbb{E}_{p^{(\gamma)}}[z_t]
=
\mathbb{E}_{p^{(\gamma)}}[\hat z_t^{\,2}]
=
\mathrm{Var}_{p^{(\gamma)}}(z_t).
\label{app:eq:mt_prime}
\end{equation}
\end{corollary}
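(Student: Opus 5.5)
The plan is to specialize the Lemma to $f=z_t$, which is legitimate because the teacher logits $\mathbf{z}_t$ are held fixed under the stop-gradient convention and hence do not depend on $\gamma$. The Lemma then gives directly
\begin{equation*}
\frac{d}{d\gamma}\mathbb{E}_{p^{(\gamma)}}[z_t]
=\mathbb{E}_{p^{(\gamma)}}[z_t\,\hat z_t]
=\mathrm{Cov}_{p^{(\gamma)}}(z_t,z_t),
\end{equation*}
and the covariance of a vector with itself is by definition $\mathrm{Var}_{p^{(\gamma)}}(z_t)$. This settles the first and last expressions of the Corollary.

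For the middle expression $\mathbb{E}_{p^{(\gamma)}}[\hat z_t^{\,2}]$, I will rewrite $z_t=\hat z_t+m_t$ inside the expectation:
\begin{equation*}
\mathbb{E}_{p^{(\gamma)}}[z_t\hat z_t]=\mathbb{E}_{p^{(\gamma)}}[\hat z_t^{\,2}]+m_t\,\mathbb{E}_{p^{(\gamma)}}[\hat z_t].
\end{equation*}
Here $\mathbb{E}_{p^{(\gamma)}}[\hat z_t]=m_t-m_t\sum_i p^{(\gamma)}_i=0$, using that $p^{(\gamma)}$ sums to one and that $m_t$ is a constant with respect to the class index. So the second term vanishes, and the centered second moment $\mathbb{E}_{p^{(\gamma)}}[\hat z_t^{\,2}]$ is the standard form of the weighted variance.

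There is essentially no obstacle. The only point needing care is that $m_t$ depends on $\gamma$ but not on the index $i$. That is all the centering argument uses, and no differentiation of $m_t$ itself is required, since the Lemma already absorbs it.
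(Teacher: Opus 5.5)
Your proposal is correct and follows the paper's own argument: you specialize the Lemma to $f=z_t$ (valid since the teacher logits do not depend on $\gamma$) and identify $\mathrm{Cov}_{p^{(\gamma)}}(z_t,z_t)$ as the variance. You then write $z_t=\hat z_t+m_t$ and use $\mathbb{E}_{p^{(\gamma)}}[\hat z_t]=0$ to obtain the centered second moment, exactly as the paper does when it computes $m_t'$ in the second-derivative derivation.
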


\subsection{Expanding the KL discrepancy}
\label{app:ssec:expand}

From Eq.~\eqref{app:eq:def_KL},
\begin{equation}
\begin{aligned}
D_\mathrm{KL}\!\bigl(p^{(\gamma)}\|q\bigr)
&=
\sum_{i=1}^K p^{(\gamma)}_i\bigl(\log p^{(\gamma)}_i-\log q_i\bigr) \\
&=
\sum_{i=1}^K p^{(\gamma)}_i\left(\gamma z_{t,i}-\log\sum_{j=1}^K \exp(\gamma z_{t,j})-\log q_i\right) \\
&=
\gamma \sum_{i=1}^K p^{(\gamma)}_i z_{t,i} - \sum_{i=1}^K p^{(\gamma)}_i \log\sum_{j=1}^K \exp(\gamma z_{t,j}) - \sum_{i=1}^K p^{(\gamma)}_i \log q_i \\
&=
\gamma \sum_{i=1}^K p^{(\gamma)}_i z_{t,i} - \log\sum_{j=1}^K \exp(\gamma z_{t,j}) \sum_{i=1}^K p^{(\gamma)}_i - \sum_{i=1}^K p^{(\gamma)}_i \log q_i \\
&=
\gamma \mathbb{E}_{p^{(\gamma)}}[z_{t}] - \log\sum_{j=1}^K \exp(\gamma z_{t,j}) - \mathbb{E}_{p^{(\gamma)}}[\log q],
\label{app:eq:KL_as_expect}
\end{aligned}
\end{equation}
where we use $\sum_{i=1}^K p^{(\gamma)}_i=1$.

\subsection{First derivative}
\label{app:ssec:first}

Differentiate Eq.~\eqref{app:eq:KL_as_expect} with respect to $\gamma$:
\begin{equation}
\begin{aligned}
\frac{d}{d\gamma}D_\mathrm{KL}\!\bigl(p^{(\gamma)}\|q\bigr)
&=
\frac{d}{d\gamma} \left\{ \gamma \mathbb{E}_{p^{(\gamma)}}[z_{t}] - \log\sum_{j=1}^K \exp(\gamma z_{t,j}) - \mathbb{E}_{p^{(\gamma)}}[\log q] \right\} \\
&=
\frac{d}{d\gamma} \gamma \mathbb{E}_{p^{(\gamma)}}[z_{t}] - \frac{d}{d\gamma} \log\sum_{j=1}^K \exp(\gamma z_{t,j}) - \frac{d}{d\gamma} \mathbb{E}_{p^{(\gamma)}}[\log q]  \\
&=
\mathbb{E}_{p^{(\gamma)}}[z_{t}] + \gamma \frac{d}{d\gamma} \mathbb{E}_{p^{(\gamma)}}[z_{t}] - \mathbb{E}_{p^{(\gamma)}}[z_{t}] -  \frac{d}{d\gamma} \mathbb{E}_{p^{(\gamma)}}[\log q] \\
&=
\gamma \frac{d}{d\gamma} \mathbb{E}_{p^{(\gamma)}}[z_{t}] -  \frac{d}{d\gamma} \mathbb{E}_{p^{(\gamma)}}[\log q]
\label{app:eq:dKL_step1}
\end{aligned}
\end{equation}
By the lemma and corollary, we obtain
\begin{equation}
\begin{aligned}
\frac{d}{d\gamma}D_\mathrm{KL}\!\bigl(p^{(\gamma)}\|q\bigr)
&=
\gamma\,\mathbb{E}_{p^{(\gamma)}}[\hat z_t^{\,2}]
-\mathbb{E}_{p^{(\gamma)}}[\hat z_t\,\log q] \\
&=
\gamma \mathrm{Var}_{p^{(\gamma)}}(z_t) - \mathrm{Cov}_{p^{(\gamma)}}(\log q,z_t).
\label{app:eq:dKL_mid}
\end{aligned}
\end{equation}

Next, note that student log-probabilities satisfy $\log q_i=z_{s,i}-\log\sum_j e^{z_{s,j}}$. Since the second term is class-independent constant $c$, it does not affect covariance with $z_t$, hence 
\begin{equation}
\begin{aligned}
\mathrm{Cov}_{p^{(\gamma)}}(\log q,z_t)
&=\mathbb{E}_{p^{(\gamma)}}[(\log q)\,z_t]-\mathbb{E}_{p^{(\gamma)}}[\log q]\mathbb{E}_{p^{(\gamma)}}[z_t] \\
&=\mathbb{E}_{p^{(\gamma)}}[(z_s-c)\,z_t]-\mathbb{E}_{p^{(\gamma)}}[z_s-c]\mathbb{E}_{p^{(\gamma)}}[z_t] \\
&=\Bigl(\mathbb{E}_{p^{(\gamma)}}[z_s z_t]-c\,\mathbb{E}_{p^{(\gamma)}}[z_t]\Bigr)
-\Bigl(\mathbb{E}_{p^{(\gamma)}}[z_s]-c\Bigr)\mathbb{E}_{p^{(\gamma)}}[z_t] \\
&=\mathbb{E}_{p^{(\gamma)}}[z_s z_t]-\mathbb{E}_{p^{(\gamma)}}[z_s]\mathbb{E}_{p^{(\gamma)}}[z_t]
=\mathrm{Cov}_{p^{(\gamma)}}(z_s,z_t). 
\end{aligned}
\end{equation}

Therefore, we obtain
\begin{equation}
\frac{d}{d\gamma}D_\mathrm{KL}\!\bigl(p^{(\gamma)}\|q\bigr)
=
\gamma\,\mathrm{Var}_{p^{(\gamma)}}(z_t)-\mathrm{Cov}_{p^{(\gamma)}}(z_s,z_t).
\label{app:eq:dKL_final}
\end{equation}

\subsection{Second derivative}
\label{app:ssec:second}

Before differentiating Eq.~\eqref{app:eq:dKL_final}, we rearrange it using the centered logits $\hat z_t, \hat z_s$. In addition to $m_t$ and $\hat z_t$ defined above, let $m_s\triangleq\mathbb{E}_{p^{(\gamma)}}[z_s]$ and $\hat z_s\triangleq z_s-m_s$ element-wise. By the definitions of variance and covariance,
\begin{equation}
\begin{aligned}
\mathrm{Var}_{p^{(\gamma)}}(z_t)
&=\mathbb{E}_{p^{(\gamma)}}[(z_t-m_t)^2]
=\mathbb{E}_{p^{(\gamma)}}[\hat z_t^{\,2}],\\
\mathrm{Cov}_{p^{(\gamma)}}(z_s,z_t)
&=\mathbb{E}_{p^{(\gamma)}}[(z_s-m_s)(z_t-m_t)]
=\mathbb{E}_{p^{(\gamma)}}[\hat z_s\,\hat z_t].
\end{aligned}
\end{equation}
Substituting these results into first derivative yeilds the following expression:
\begin{equation}
\frac{d}{d\gamma}D_\mathrm{KL}\!\bigl(p^{(\gamma)}\|q\bigr)
=
\gamma\,\mathbb{E}_{p^{(\gamma)}}[\hat z_t^{\,2}]-\mathbb{E}_{p^{(\gamma)}}[\hat z_s\,\hat z_t].
\label{app:eq:dKL_centered}
\end{equation}

For calculating the second derivative, let 
\begin{equation}
a(\gamma)\triangleq \mathbb{E}_{p^{(\gamma)}}[\hat z_t^{\,2}],\qquad b(\gamma)\triangleq \mathbb{E}_{p^{(\gamma)}}[\hat z_s\,\hat z_t].
\end{equation}
Then $\frac{d}{d\gamma}D_\mathrm{KL}\!\bigl(p^{(\gamma)}\|q\bigr)=\gamma a(\gamma)-b(\gamma)$ and
\begin{equation}
\frac{d^2}{d\gamma^2}D_\mathrm{KL}\!\bigl(p^{(\gamma)}\|q\bigr)=a(\gamma)+\gamma a'(\gamma)-b'(\gamma).
\end{equation}
Although $\hat z_t,\hat z_s$ depend on $\gamma$ through $m_t,m_s$, the derivatives can be computed by expanding into these terms.

First, we expand $a(\gamma)$:
\begin{equation}
\begin{aligned}
a(\gamma)
=\mathbb{E}_{p^{(\gamma)}}[(z_t-m_t)^2]
&=\mathbb{E}_{p^{(\gamma)}}[z_t^2]-2m_t\mathbb{E}_{p^{(\gamma)}}[z_t]+m_t^2 \\
&=\mathbb{E}_{p^{(\gamma)}}[z_t^2]-m_t^2,
\end{aligned}
\end{equation}
since $\mathbb{E}_{p^{(\gamma)}}[z_t]=m_t$. Differentiating this equation gives
\begin{equation}
a'(\gamma)
=\frac{d}{d\gamma}\mathbb{E}_{p^{(\gamma)}}[z_t^2]-2m_t\,m_t'.
\end{equation}
Because $z_t^2$ is $\gamma$-independent, we can apply the lemma
\begin{equation}
\frac{d}{d\gamma}\mathbb{E}_{p^{(\gamma)}}[z_t^2]
=\mathbb{E}_{p^{(\gamma)}}[z_t^2\,\hat z_t].
\end{equation}
Similarly, applying the lemma with $f=z_t$ yields
\begin{equation}
\begin{aligned}
m_t'
&=\frac{d}{d\gamma}\mathbb{E}_{p^{(\gamma)}}[z_t]\\
&=\mathbb{E}_{p^{(\gamma)}}[z_t\,\hat z_t]\\
&=\mathbb{E}_{p^{(\gamma)}}[(z_t-m_t)z_t]\\
&=\mathbb{E}_{p^{(\gamma)}}\!\left[\hat z_t(\hat z_t+m_t)\right]\\
&=\mathbb{E}_{p^{(\gamma)}}[\hat z_t^{\,2}] + m_t\,\mathbb{E}_{p^{(\gamma)}}[\hat z_t]\\
&=\mathbb{E}_{p^{(\gamma)}}[\hat z_t^{\,2}] \qquad (\because~\mathbb{E}_{p^{(\gamma)}}[\hat z_t]=0)\\
&=a(\gamma).
\end{aligned}
\end{equation}
Substituting these results into $a'(\gamma)$ gives
\begin{equation}
\begin{aligned}
a'(\gamma)
&=\mathbb{E}_{p^{(\gamma)}}[z_t^2\,\hat z_t]-2m_t\,\mathbb{E}_{p^{(\gamma)}}[\hat z_t^{\,2}] \\
&=\mathbb{E}_{p^{(\gamma)}}\big[(\hat z_t+m_t)^2\,\hat z_t\big]-2m_t\,\mathbb{E}_{p^{(\gamma)}}[\hat z_t^{\,2}] \\
&=\mathbb{E}_{p^{(\gamma)}}\big[(\hat z_t^2+2m_t\hat z_t+m_t^2)\hat z_t\big]-2m_t\,\mathbb{E}_{p^{(\gamma)}}[\hat z_t^{\,2}] \\
&=\mathbb{E}_{p^{(\gamma)}}[\hat z_t^{\,3}]
+2m_t\,\mathbb{E}_{p^{(\gamma)}}[\hat z_t^{\,2}]
+m_t^2\,\mathbb{E}_{p^{(\gamma)}}[\hat z_t]
-2m_t\,\mathbb{E}_{p^{(\gamma)}}[\hat z_t^{\,2}] \\
&=\mathbb{E}_{p^{(\gamma)}}[\hat z_t^{\,3}],
\end{aligned}
\end{equation}
where we used $\mathbb{E}_{p^{(\gamma)}}[\hat z_t]=\mathbb{E}_{p^{(\gamma)}}[z_t-m_t]=0$.

Next, we derive the derivative of $b(\gamma)$. we expand 
\begin{equation}
\begin{aligned}
b(\gamma)
&=\mathbb{E}_{p^{(\gamma)}}[(z_s-m_s)(z_t-m_t)] \\
&=\mathbb{E}_{p^{(\gamma)}}[z_s z_t]-m_s\,\mathbb{E}_{p^{(\gamma)}}[z_t]-m_t\,\mathbb{E}_{p^{(\gamma)}}[z_s]+m_s m_t \\
&=\mathbb{E}_{p^{(\gamma)}}[z_s z_t]-m_s m_t,
\end{aligned}
\end{equation}
since $\mathbb{E}_{p^{(\gamma)}}[z_t]=m_t$ and $\mathbb{E}_{p^{(\gamma)}}[z_s]=m_s$. Differentiating this equation yields
\begin{equation}
b'(\gamma)
=\frac{d}{d\gamma}\mathbb{E}_{p^{(\gamma)}}[z_s z_t] - m_s' m_t - m_s m_t'.
\end{equation}
Because $z_s z_t$ is $\gamma$-independent (logits are treated as fixed w.r.t. $\gamma$), we apply the lemma with $f=z_s z_t$:
\begin{equation}
\frac{d}{d\gamma}\mathbb{E}_{p^{(\gamma)}}[z_s z_t]
=\mathbb{E}_{p^{(\gamma)}}[z_s z_t\,\hat z_t].
\end{equation}
Moreover, applying the lemma with $f=z_s$ gives
\begin{equation}
\begin{aligned}
m_s'
&=\frac{d}{d\gamma}\mathbb{E}_{p^{(\gamma)}}[z_s]  \\
&=\mathbb{E}_{p^{(\gamma)}}[z_s\,\hat z_t] \\
&=\mathbb{E}_{p^{(\gamma)}}\!\left[\bigl((z_s-m_s)+m_s\bigr)\hat z_t\right]\\
&=\mathbb{E}_{p^{(\gamma)}}[(z_s-m_s)\hat z_t] + m_s\,\mathbb{E}_{p^{(\gamma)}}[\hat z_t]\\
&=\mathbb{E}_{p^{(\gamma)}}[(z_s-m_s)\hat z_t] \qquad (\because~\mathbb{E}_{p^{(\gamma)}}[\hat z_t]=0)\\
&=\mathbb{E}_{p^{(\gamma)}}[\hat z_s\,\hat z_t]\\
&=b(\gamma).
\end{aligned}
\end{equation}
and we already have $m_t'=a(\gamma)$. Substituting these results into $b'(\gamma)$, we obtain
\begin{equation}
\begin{aligned}
b'(\gamma)
&=\mathbb{E}_{p^{(\gamma)}}[z_s z_t\,\hat z_t] - b(\gamma)m_t - m_s a(\gamma) \\
&=\mathbb{E}_{p^{(\gamma)}}\big[z_s(\hat z_t+m_t)\hat z_t\big] - m_t\,\mathbb{E}_{p^{(\gamma)}}[\hat z_s\hat z_t] - m_s\,\mathbb{E}_{p^{(\gamma)}}[\hat z_t^{\,2}] \\
&=\mathbb{E}_{p^{(\gamma)}}[z_s \hat z_t^{\,2}] + m_t\,\mathbb{E}_{p^{(\gamma)}}[z_s \hat z_t]
- m_t\,\mathbb{E}_{p^{(\gamma)}}[\hat z_s\hat z_t] - m_s\,\mathbb{E}_{p^{(\gamma)}}[\hat z_t^{\,2}] \\
&=\mathbb{E}_{p^{(\gamma)}}[z_s \hat z_t^{\,2}] - m_s\,a(\gamma),
\end{aligned}
\end{equation}
since $m_s'=b(\gamma)$. Finally, write $z_s=\hat z_s+m_s$:
\begin{equation}
\begin{aligned}
\mathbb{E}_{p^{(\gamma)}}[z_s \hat z_t^{\,2}] - m_s\,a(\gamma)
&=\mathbb{E}_{p^{(\gamma)}}[(\hat z_s+m_s)\hat z_t^{\,2}] - m_s\,\mathbb{E}_{p^{(\gamma)}}[\hat z_t^{\,2}] \\
&=\mathbb{E}_{p^{(\gamma)}}[\hat z_s\,\hat z_t^{\,2}].
\end{aligned}
\end{equation}

Finally, we obtain
\begin{equation}
\begin{aligned}
\frac{d^2}{d\gamma^2}D_\mathrm{KL}\!\bigl(p^{(\gamma)}\|q\bigr)
&=
\mathbb{E}_{p^{(\gamma)}}[\hat z_t^{\,2}]
+\gamma\,\mathbb{E}_{p^{(\gamma)}}[\hat z_t^{\,3}]
-\mathbb{E}_{p^{(\gamma)}}[\hat z_s\,\hat z_t^{\,2}] \\
&=\mathrm{Var}_{p^{(\gamma)}}(z_t)
-\mathrm{Cov}_{p^{(\gamma)}}(z_s,\hat z_t^{\,2})
+\gamma\,\mathbb{E}_{p^{(\gamma)}}[\hat z_t^{\,3}].
\label{app:eq:ddKL_final}
\end{aligned}
\end{equation}

\section{Comparison with Adaptive Temperature Distillation}
\label{app:sec:atd}
Adaptive Temperature Distillation (ATD)~\cite{yang2025atd} precomputes sample-wise temperatures from the difficulty estimated by a pretrained teacher. For a fair comparison of temperature-adaptation strategies, we use the CNN-RIS-AT setting from ATD on CIFAR100 with WRN-40-2$\rightarrow$WRN-16-2 and apply only its temperature regulation strategy, without Mixup.

\input{tabs/tabs_atd_comparison}

TA-TTM and TA-WTTM outperform ATD by 0.59 and 0.74 percentage points, respectively, while also improving over TTM and WTTM. The methods differ in when and how the sample-wise temperatures are determined: ATD computes them from pretrained-teacher difficulty before distillation, whereas our method maintains and repeatedly updates each sample's $\gamma$ during training from the current teacher--student mismatch within the TTM objective.

\section{Additional Diagnostics of Temperature Adaptation}
\label{app:sec:additional_diagnostics}

\subsection{Cross-architecture temperature diagnostics}
Figure~\ref{fig:temperature-diagnostics-cross-supp} complements the same-architecture diagnostics in the main paper with the cross-architecture pair WRN-40-2$\rightarrow$ShuffleNetV1. TA-TTM and TA-WTTM show the same qualitative pattern: the inverse-temperature distributions broaden during training, and larger per-sample cross-entropy losses tend to be associated with smaller $\gamma$.

\input{figs/fig_temperature_diagnostics_cross_supp}

\subsection{Curvature and clipping}
We examine the curvature used by the temperature update for WRN-40-2$\rightarrow$WRN-16-2. Table~\ref{tab:curvature_diagnostics} reports statistics of the estimated curvature $h_{\mathrm{update}}$ reconstructed over all 50,000 training samples at three update epochs. Negative curvature occurs primarily at epoch 100, but only for 0.164\% of samples for either method. It disappears for TA-TTM at epochs 200 and 240, and falls to 0.004\% and then 0\% for TA-WTTM. We observe no samples with $|h_{\mathrm{update}}|<10^{-8}$, the threshold associated with the denominator guard, and only 0.002\% with $|h_{\mathrm{update}}|<10^{-3}$ in one setting. Consistent with these curvature statistics, clipping is almost never activated. Thus, damping and clipping act mainly as numerical safeguards in this experiment rather than routinely determining the temperature update.

\input{tabs/tabs_curvature_diagnostics}

\subsection{Correlation between inverse temperature and sample difficulty}
We quantify the relationship visualized in the main-paper diagnostic figures using the same protocol of 1,000 samples per epoch. Table~\ref{tab:gamma_ce_correlation} reports both Pearson and Spearman correlations between the learned inverse temperature $\gamma$ and the per-sample cross-entropy loss for WRN-40-2$\rightarrow$WRN-16-2. All coefficients are negative at epochs 100, 200, and 240 for both methods, with Pearson correlations ranging from $-0.713$ to $-0.645$ and Spearman correlations from $-0.744$ to $-0.681$. Across all 13 settings used for the paper figures, the mean Pearson correlation is $-0.611$ for TA-TTM and $-0.595$ for TA-WTTM. These quantitative results corroborate the negative trend shown in the main-paper diagnostic figures and summarized in its conclusion: samples with higher cross-entropy loss tend to receive smaller $\gamma$, corresponding to softer teacher targets.

\input{tabs/tabs_gamma_ce_correlation}

\section{Hyperparameters}
For CIFAR100, Table~\ref{tab:hyperparams} summarizes the hyperparameters of our proposed method. We use the original TTM and WTTM configurations for the initial inverse temperature $\gamma_{\mathrm{init}}$ and $\beta$, and grid-search the additional step size $\eta$ over $\{0.001, 0.002, \ldots, 0.005\}$. For ImageNet1k, we use the TTM configuration and grid-search $\eta$ over the same range.

\input{tabs/tabs_hyperparams}
\end{document}

%% file: figs/fig_overview.tex
\begin{figure}[t]
\centering
\includegraphics[width=0.9\linewidth]{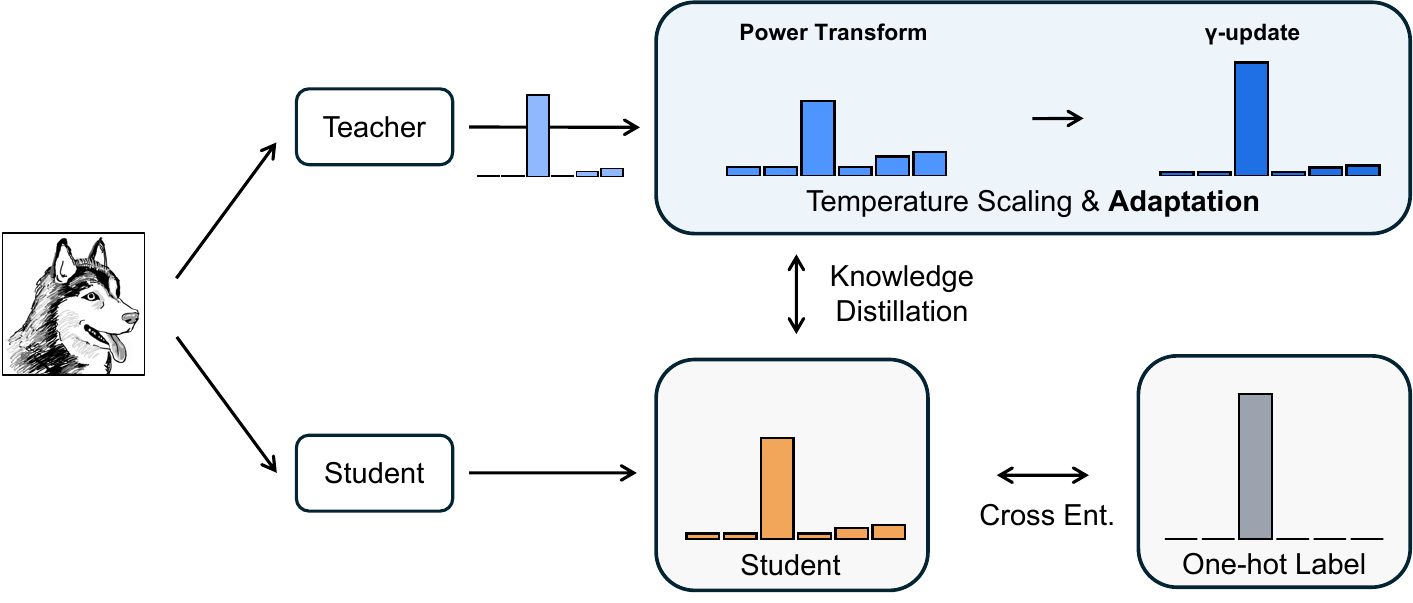}
\caption{Overview of the proposed Temperature-Adaptive Transformed Teacher Matching. Given an input sample, the teacher produces a predictive distribution, which is calibrated by a power transform with an adaptive inverse temperature $\gamma$. The inverse temperature $\gamma$ is updated to obtain sample-adaptive soft targets for distillation. The student is trained using cross-entropy and a distillation loss that matches the student prediction to the adapted teacher distribution.}
\label{fig:overview}
\end{figure}

%% file: algo/algo_update.tex
\begin{algorithm}[t]
\caption{Training with per-sample inverse-temperature adaptation}
\label{alg:gamma}
\begin{algorithmic}[1]
\REQUIRE Training data $\{(\mathbf{x},y)\}$, teacher $f_t$, student $f_s$, initial per-sample inverse temperatures $\{\gamma\}$, total epochs $E$, warm-up $E_{\mathrm{warm}}$, epoch update interval $I_{\gamma}$
\FOR{epoch $=1,\dots,E$}
  \FOR{each mini-batch $\mathcal{B}$}
    \STATE Compute teacher logits $\mathbf{z}_t=f_t(\mathbf{x})$ and student logits $\mathbf{z}_s=f_s(\mathbf{x})$
    \STATE Compute $p^{(\gamma)} \leftarrow \mathrm{softmax}(\gamma\,\mathbf{z}_t)$ and $q\leftarrow \mathrm{softmax}(\mathbf{z}_s)$
    \IF{epoch $> E_{\mathrm{warm}}$ and $(\mathrm{epoch}-E_{\mathrm{warm}}-1) \bmod I_{\gamma}=0$}
      \STATE $m_t \leftarrow \sum_i p^{(\gamma)}_i z_{t,i}$,\quad $m_s \leftarrow \sum_i p^{(\gamma)}_i z_{s,i}$
      \STATE $\hat{\mathbf{z}}_t \leftarrow \mathbf{z}_t - m_t$, \quad $\hat{\mathbf{z}}_s \leftarrow \mathbf{z}_s - m_s$
      \STATE Compute derivatives using Eq.~\eqref{eq:kl_derivatives} with detached logits
      \STATE $\gamma \leftarrow \texttt{clip}\!\left(\gamma-\eta\,\frac{\frac{d}{d\gamma}D_\mathrm{KL}\!\bigl(p^{(\gamma)}\|q\bigr)}{\frac{d^2}{d\gamma^2}D_\mathrm{KL}\!\bigl(p^{(\gamma)}\|q\bigr)+\delta},\gamma_{\min},\gamma_{\max}\right)$
    \ENDIF
    \STATE Compute the TTM/WTTM distillation loss and update student parameters
  \ENDFOR
\ENDFOR
\end{algorithmic}
\end{algorithm}

%% file: tabs/tabs_cifar100_same.tex
\begin{table}[t!]
\small
\setlength{\tabcolsep}{2pt}
\caption{Performance comparison on CIFAR100 in the same-architecture distillation setting. $\dagger$ indicates results reproduced by us using the official implementation under the protocol described in the paper~\cite{ttm}. $\Delta$ and $\Delta^{\dagger}$ denote accuracy differences from TTM/WTTM and TTM$^{\dagger}$/WTTM$^{\dagger}$, respectively; positive values are highlighted in red and negative values in blue.
}
\label{tab:cifar100_same}
\centering
\resizebox{\textwidth}{!}{
\begin{tabular}{cccccccc}
\toprule
\multicolumn{1}{c}{\multirow{2}{*}{Teacher}} 
& WRN-40-2 & WRN-40-2 & ResNet56 & ResNet110 & ResNet110 & ResNet32x4 & VGG13 \\
& 75.61 & 75.61 & 72.34 & 74.31 & 74.31 & 79.42 & 74.64 \\
\multicolumn{1}{c}{\multirow{2}{*}{Student}} 
& WRN-16-2 & WRN-40-1 & ResNet20 & ResNet20 & ResNet32 & ResNet8x4 & VGG8 \\
& 73.26 & 71.98 & 69.06 & 69.06 & 71.14 & 72.50 & 70.36 \\
\midrule
\multicolumn{8}{l}{\textit{Feature-based Knowledge Distillation}}\vspace{1mm} \\
FitNet & $73.58_{\pm 0.32}$ & $72.24_{\pm 0.24}$ & $69.21_{\pm 0.36}$ & $68.99_{\pm 0.27}$ & $71.06_{\pm 0.13}$ & $73.50_{\pm 0.28}$ & $71.02_{\pm 0.31}$ \\
AT & $74.08_{\pm 0.25}$ & $72.77_{\pm 0.10}$ & $70.55_{\pm 0.27}$ & $70.22_{\pm 0.16}$ & $72.31_{\pm 0.08}$ & $73.44_{\pm 0.19}$ & $71.43_{\pm 0.09}$ \\
VID & $74.11_{\pm 0.24}$ & $73.30_{\pm 0.13}$ & $70.38_{\pm 0.14}$ & $70.16_{\pm 0.39}$ & $72.61_{\pm 0.28}$ & $73.09_{\pm 0.21}$ & $71.23_{\pm 0.06}$ \\
RKD & $73.35_{\pm 0.09}$ & $72.22_{\pm 0.20}$ & $69.61_{\pm 0.06}$ & $69.25_{\pm 0.05}$ & $71.82_{\pm 0.34}$ & $71.90_{\pm 0.11}$ & $71.48_{\pm 0.05}$ \\
PKT & $74.54_{\pm 0.04}$ & $73.45_{\pm 0.19}$ & $70.34_{\pm 0.04}$ & $70.25_{\pm 0.04}$ & $72.61_{\pm 0.17}$ & $73.64_{\pm 0.18}$ & $72.88_{\pm 0.09}$ \\
CRD & $75.48_{\pm 0.09}$ & $74.14_{\pm 0.22}$ & $71.16_{\pm 0.17}$ & $71.46_{\pm 0.09}$ & $73.48_{\pm 0.13}$ & $75.51_{\pm 0.18}$ & $73.94_{\pm 0.22}$ \\
\midrule
\multicolumn{8}{l}{\textit{Logits-based Knowledge Distillation}}\vspace{1mm} \\
KD & $74.92_{\pm 0.28}$ & $73.54_{\pm 0.20}$ & $70.66_{\pm 0.24}$ & $70.67_{\pm 0.27}$ & $73.08_{\pm 0.18}$ & $73.33_{\pm 0.25}$ & $72.98_{\pm 0.19}$ \\
DIST & $75.51_{\pm 0.04 }$ & $74.73_{\pm 0.24 }$ & $71.75_{\pm 0.30 }$ & $71.65_{\pm 0.21 }$ & $73.69_{\pm 0.23 }$ & $76.31_{\pm 0.19 }$ & $73.89_{\pm 0.19}$ \\
DKD & 76.24 & 74.81 & 71.97 & n/a & 74.11 & 76.32 & 74.68 \\
TTM & $76.23_{\pm 0.15 }$ & $74.32_{\pm 0.31 }$ & $71.83_{\pm 0.16 }$ & $71.46_{\pm 0.16 }$ & $73.97_{\pm 0.23 }$ & $76.17_{\pm 0.28 }$ & $74.33_{\pm 0.07}$ \\
TTM$^{\dagger}$ & $76.15_{\pm 0.17}$ & $74.07_{\pm 0.33}$ & $71.89_{\pm 0.21}$ & $71.33_{\pm 0.16}$ & $73.83_{\pm 0.28}$ & $76.02_{\pm 0.18}$ & $74.18_{\pm 0.28}$ \\
\midrule
\multicolumn{8}{l}{\textit{Sample-Adaptive Knowledge Distillation}}\vspace{1mm} \\
WTTM & $76.37_{\pm 0.10 }$ & $74.58_{\pm 0.26 }$ & $71.92_{\pm 0.40 }$ & $71.67_{\pm 0.28 }$ & $74.13_{\pm 0.37 }$ & $76.06_{\pm 0.27 }$ & $74.44_{\pm 0.19}$ \\
WTTM$^{\dagger}$ & $76.26_{\pm 0.19}$ & $74.35_{\pm 0.24}$ & $71.77_{\pm 0.30}$ & $71.33_{\pm 0.30}$ & $73.93_{\pm 0.38}$ & $75.93_{\pm 0.31}$ & $74.33_{\pm 0.23}$ \\
\midrule
\multicolumn{8}{l}{\textit{Temperature-Adaptive Knowledge Distillation}}\vspace{1mm} \\
CTKD & $75.45$ & $73.93$ & $71.19$ & $70.99$ & $73.52$ & n/a & $73.52$ \\
LS & $76.11$ & $74.37$ & $71.43$ & $71.48$ & $74.17$ & $76.62$ & $74.36$ \\
EA-KD & n/a & $74.38$ & n/a & n/a & n/a & $75.46$ & $74.08$ \\
\midrule
\multicolumn{8}{l}{\textit{Temperature-Adaptive Transformed Teacher Matching}}\vspace{1mm} \\
\rowcolor{blue!8}
TA-TTM & $76.42_{\pm 0.16}$ & $74.23_{\pm 0.32}$ & $71.85_{\pm 0.05}$ & $71.46_{\pm 0.29}$ & $73.96_{\pm 0.42}$ & $76.33_{\pm 0.27}$ & $74.62_{\pm 0.13}$  \\
$\Delta$ & $\textcolor{red!70!black}{+0.19}$ & $\textcolor{blue!70!black}{-0.09}$ & $\textcolor{red!70!black}{+0.02}$ & $+0.00$ & $\textcolor{blue!70!black}{-0.01}$ & $\textcolor{red!70!black}{+0.16}$ & $\textcolor{red!70!black}{+0.29}$ \\
$\Delta^{\dagger}$ & $\textcolor{red!70!black}{+0.27}$ & $\textcolor{red!70!black}{+0.16}$ & $\textcolor{blue!70!black}{-0.04}$ & $\textcolor{red!70!black}{+0.13}$ & $\textcolor{red!70!black}{+0.13}$ & $\textcolor{red!70!black}{+0.31}$ & $\textcolor{red!70!black}{+0.44}$ \\
\rowcolor{blue!8}
TA-WTTM & $76.57_{\pm 0.17}$ & $74.49_{\pm 0.16}$ & $71.91_{\pm 0.22}$ & $71.73_{\pm 0.21}$ & $74.04_{\pm 0.11}$ & $76.28_{\pm 0.24}$ & $74.71_{\pm 0.08}$  \\
$\Delta$ & $\textcolor{red!70!black}{+0.20}$ & $\textcolor{blue!70!black}{-0.09}$ & $\textcolor{blue!70!black}{-0.01}$ & $\textcolor{red!70!black}{+0.06}$ & $\textcolor{blue!70!black}{-0.09}$ & $\textcolor{red!70!black}{+0.22}$ & $\textcolor{red!70!black}{+0.27}$ \\
$\Delta^{\dagger}$ & $\textcolor{red!70!black}{+0.31}$ & $\textcolor{red!70!black}{+0.14}$ & $\textcolor{red!70!black}{+0.14}$ & $\textcolor{red!70!black}{+0.40}$ & $\textcolor{red!70!black}{+0.11}$ & $\textcolor{red!70!black}{+0.35}$ & $\textcolor{red!70!black}{+0.38}$ \\
\bottomrule
\end{tabular}}
\end{table}

%% file: tabs/tabs_cifar100_diff.tex
\begin{table}[t!]
\small
\setlength{\tabcolsep}{2pt}
\caption{Performance comparison on CIFAR100 in the cross-architecture distillation setting.}
\label{tab:std_diff}
\centering
\resizebox{\textwidth}{!}{
\begin{tabular}{ccccccc}
\toprule
\multicolumn{1}{c}{\multirow{2}{*}{Teacher}} 
& VGG13 & ResNet50 & ResNet50 & ResNet32$\times$4 & ResNet32$\times$4 & WRN-40-2 \\
& 74.64 & 79.34 & 79.34 & 79.42 & 79.42 & 75.61 \\
\multicolumn{1}{c}{\multirow{2}{*}{Student}} 
& MobileNetV2 & MobileNetV2 & VGG8 & ShuffleNetV1 & ShuffleNetV2 & ShuffleNetV1 \\
& 64.60 & 64.60 & 70.36 & 70.50 & 71.82 & 70.50 \\
\midrule
\multicolumn{7}{l}{\textit{Feature-based Knowledge Distillation}}\vspace{1mm} \\
FitNet & $64.14_{\pm 0.50}$ & $63.16_{\pm 0.47}$ & $70.69_{\pm 0.22}$ & $73.59_{\pm 0.15}$ & $73.54_{\pm 0.22}$ & $73.73_{\pm 0.32}$ \\
AT & $59.40_{\pm 0.20}$ & $58.58_{\pm 0.54}$ & $71.84_{\pm 0.28}$ & $71.73_{\pm 0.31}$ & $72.73_{\pm 0.09}$ & $73.32_{\pm 0.35}$ \\
VID & $65.56_{\pm 0.42}$ & $67.57_{\pm 0.28}$ & $70.30_{\pm 0.31}$ & $73.38_{\pm 0.09}$ & $73.40_{\pm 0.17}$ & $73.61_{\pm 0.12}$ \\
RKD & $64.52_{\pm 0.45}$ & $64.43_{\pm 0.42}$ & $71.50_{\pm 0.07}$ & $72.28_{\pm 0.39}$ & $73.21_{\pm 0.28}$ & $72.21_{\pm 0.16}$ \\
PKT & $67.13_{\pm 0.30}$ & $66.52_{\pm 0.33}$ & $73.01_{\pm 0.14}$ & $74.10_{\pm 0.25}$ & $74.69_{\pm 0.34}$ & $73.89_{\pm 0.16}$ \\
CRD & $69.73_{\pm 0.42}$ & $69.11_{\pm 0.28}$ & $74.30_{\pm 0.14}$ & $75.11_{\pm 0.32}$ & $75.65_{\pm 0.10}$ & $76.05_{\pm 0.14}$ \\
\midrule
\multicolumn{7}{l}{\textit{Logits-based Knowledge Distillation}}\vspace{1mm} \\
KD & $67.37_{\pm 0.32}$ & $67.35_{\pm 0.32}$ & $73.81_{\pm 0.13}$ & $74.07_{\pm 0.19}$ & $74.45_{\pm 0.27}$ & $74.83_{\pm 0.17}$ \\
DIST & $68.50_{\pm 0.26}$ & $68.66_{\pm 0.23}$ & $74.11_{\pm 0.07}$ & $76.34_{\pm 0.18}$ & $77.35_{\pm 0.25}$ & $76.40_{\pm 0.03}$ \\
DKD & 69.71 & 70.35 & n/a & 76.45 & 77.07 & 76.70 \\
TTM & $68.98_{\pm 0.85}$ & $69.24_{\pm 0.28}$ & $74.87_{\pm 0.31}$ & $74.18_{\pm 0.26}$ & $76.57_{\pm 0.26}$ & $75.39_{\pm 0.33}$ \\
TTM$^{\dagger}$ & $68.65_{\pm 0.18}$ & $67.74_{\pm 0.71}$ & $74.73_{\pm 0.33}$ & $73.35_{\pm 0.17}$ & $76.32_{\pm 0.10}$ & $75.16_{\pm 0.09}$ \\
\midrule
\multicolumn{7}{l}{\textit{Sample-Adaptive Knowledge Distillation}}\vspace{1mm} \\
WTTM & $69.16_{\pm 0.20}$ & $69.59_{\pm 0.58}$ & $74.82_{\pm 0.28}$ & $74.37_{\pm 0.39}$ & $76.55_{\pm 0.08}$ & $75.42_{\pm 0.34}$ \\
WTTM$^{\dagger}$ & $68.07_{\pm 0.66}$ & $69.20_{\pm 0.16}$ & $74.56_{\pm 0.24}$ & $73.50_{\pm 0.29}$ & $76.70_{\pm 0.15}$ & $75.39_{\pm 0.17}$ \\
\midrule
\multicolumn{7}{l}{\textit{Temperature-Adaptive Knowledge Distillation}}\vspace{1mm} \\
CTKD & $68.46$ & $68.47$ & n/a & $74.48$ & $75.31$ & $75.78$ \\
LS & $68.61$ & $69.02$ & n/a & n/a & $75.56$ & n/a \\
EA-KD & $69.17$ & $69.67$ & n/a & n/a & $75.91$ & n/a \\

\midrule
\multicolumn{7}{l}{\textit{Temperature-Adaptive Transformed Teacher Matching}}\vspace{1mm} \\
\rowcolor{blue!8}
TA-TTM & $68.47_{\pm 0.50}$ & $68.86_{\pm 0.20}$ & $75.04_{\pm 0.08}$ & $74.28_{\pm 0.41}$ & $76.45_{\pm 0.14}$ & $75.61_{\pm 0.08}$ \\
$\Delta$ & $\textcolor{blue!70!black}{-0.51}$ & $\textcolor{blue!70!black}{-0.38}$ & $\textcolor{red!70!black}{+0.17}$ & $\textcolor{red!70!black}{+0.10}$ & $\textcolor{blue!70!black}{-0.12}$ & $\textcolor{red!70!black}{+0.22}$ \\
$\Delta^{\dagger}$ & $\textcolor{blue!70!black}{-0.18}$ & $\textcolor{red!70!black}{+1.12}$ & $\textcolor{red!70!black}{+0.31}$ & $\textcolor{red!70!black}{+0.93}$ & $\textcolor{red!70!black}{+0.13}$ & $\textcolor{red!70!black}{+0.45}$ \\
\rowcolor{blue!8}
TA-WTTM & $68.84_{\pm 0.74}$ & $69.51_{\pm 0.33}$ & $75.16_{\pm 0.29}$ & $74.32_{\pm 0.23}$ & $76.70_{\pm 0.23}$ & $75.72_{\pm 0.29}$ \\
$\Delta$ & $\textcolor{blue!70!black}{-0.32}$ & $\textcolor{blue!70!black}{-0.08}$ & $\textcolor{red!70!black}{+0.34}$ & $\textcolor{blue!70!black}{-0.05}$ & $\textcolor{red!70!black}{+0.15}$ & $\textcolor{red!70!black}{+0.30}$ \\
$\Delta^{\dagger}$ & $\textcolor{red!70!black}{+0.77}$ & $\textcolor{red!70!black}{+0.31}$ & $\textcolor{red!70!black}{+0.60}$ & $\textcolor{red!70!black}{+0.82}$ & $+0.00$ & $\textcolor{red!70!black}{+0.33}$ \\
\bottomrule
\end{tabular}}
\end{table}

%% file: tabs/tabs_imagenet.tex
\begin{table}[t!]
\small
\setlength{\tabcolsep}{4pt}
\caption{Performance comparison on ImageNet1k. TTM and WTTM results are from our reproduced experiments. These results are obtained from a single run.}
\label{tab:ImageNet_main}
\centering
\resizebox{0.8\textwidth}{!}{
\begin{tabular}{lcccccc}
\toprule
Teacher & \multicolumn{6}{c}{ResNet-34 (73.31)} \\
Student & \multicolumn{6}{c}{ResNet-18 (69.76)} \\
\midrule
Method & KD & TTM & WTTM & EA-KD & \cellcolor{blue!8}TA-TTM & \cellcolor{blue!8}TA-WTTM \\
Top-1 Acc. & 70.66 & 71.74 & 71.81 & 71.79 & \cellcolor{blue!8}71.92 ($\textcolor{red!70!black}{+0.18}$) & \cellcolor{blue!8}72.11 ($\textcolor{red!70!black}{+0.30}$) \\
\bottomrule
\end{tabular}}
\end{table}

%% file: tabs/tabs_cifar100_same_noce.tex
\begin{table}[t!]
\small
\setlength{\tabcolsep}{5pt}
\caption{Top-1 accuracy (\%) on CIFAR100 same-architecture pairs. TTM/WTTM variants omit cross-entropy; KD with cross-entropy is included as a reference. Results are reported over three seeds.}
\label{tab:cifar100_same_noce}
\centering
\begin{tabular}{c c c c}
\toprule
\multicolumn{1}{c}{Teacher}
& WRN-40-2 & ResNet56 & VGG13 \\
\multicolumn{1}{c}{Student}
& WRN-16-2 & ResNet20 & VGG8 \\
\midrule
KD w/ CE & $75.31_{\pm 0.25}$ & $71.52_{\pm 0.31}$ & $73.31_{\pm 0.28}$ \\
TTM w/o CE & $73.89_{\pm 0.19}$ & $71.21_{\pm 0.38}$ & $73.13_{\pm 0.30}$ \\
WTTM w/o CE & $75.27_{\pm 0.09}$ & $71.17_{\pm 0.02}$ & $74.02_{\pm 0.04}$ \\
\midrule
\rowcolor{blue!8}
TA-TTM w/o CE & $74.25_{\pm 0.20}$ & $71.29_{\pm 0.10}$ & $73.48_{\pm 0.23}$ \\
\rowcolor{blue!8}
TA-WTTM w/o CE & $75.59_{\pm 0.16}$ & $71.54_{\pm 0.41}$ & $74.14_{\pm 0.03}$ \\
\bottomrule
\end{tabular}
\end{table}

%% file: tabs/tabs_cifar100_vit.tex
\begin{table}[t!]
\small
\setlength{\tabcolsep}{7pt}
\caption{Performance comparison on CIFAR100 in the distillation setting from transformer-based teachers to CNN-based students.}
\label{tab:cifar100_transformer}
\centering
\begin{tabular}{c c c c}
\toprule
\multicolumn{1}{c}{Teacher}
& ViT-S (92.44) & Swin-T (89.26) & Mixer-B/16 (87.62) \\
\multicolumn{1}{c}{Student}
& ResNet18 (74.01) & ResNet18 (74.01) & ResNet18 (74.01) \\
\midrule
KD     & $76.41_{\pm 0.16}$ & $77.06_{\pm 0.17}$ & $76.84_{\pm 0.22}$ \\
EA-KD  & $76.82_{\pm 0.03}$ & $77.05_{\pm 0.18}$ & $76.81_{\pm 0.17}$ \\
TTM    & $78.89_{\pm 0.19}$ & $80.76_{\pm 0.15}$ & $79.04_{\pm 0.20}$ \\
WTTM   & $79.16_{\pm 0.11}$ & $80.37_{\pm 0.73}$ & $79.03_{\pm 0.17}$ \\
\midrule
\rowcolor{blue!8}
TA-TTM  & $79.31_{\pm 0.25}$ ($\textcolor{red!70!black}{+0.42}$) & $80.84_{\pm 0.07}$ ($\textcolor{red!70!black}{+0.08}$) & $79.29_{\pm 0.16}$ ($\textcolor{red!70!black}{+0.25}$) \\
\rowcolor{blue!8}
TA-WTTM & $79.55_{\pm 0.25}$ ($\textcolor{red!70!black}{+0.39}$) & $80.95_{\pm 0.24}$ ($\textcolor{red!70!black}{+0.58}$) & $79.59_{\pm 0.21}$ ($\textcolor{red!70!black}{+0.56}$) \\
\bottomrule
\end{tabular}
\end{table}

%% file: figs/fig_temperature_diagnostics_tattm.tex
\begin{figure*}[!t]
\centering

\begin{minipage}[t]{0.31\textwidth}
\centering
\includegraphics[width=\linewidth]{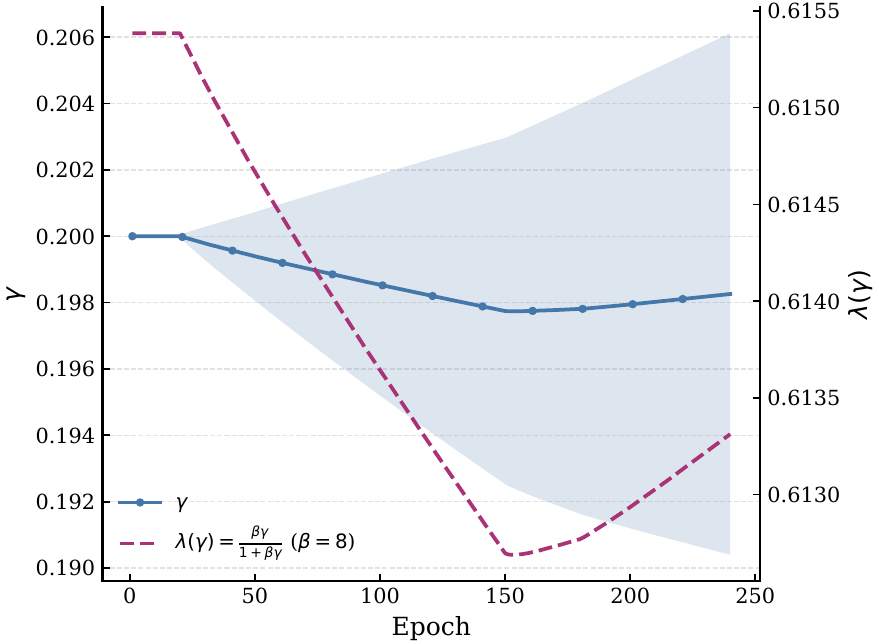}\\
(a) Dynamics
\end{minipage}\hfill
\begin{minipage}[t]{0.31\textwidth}
\centering
\includegraphics[width=\linewidth]{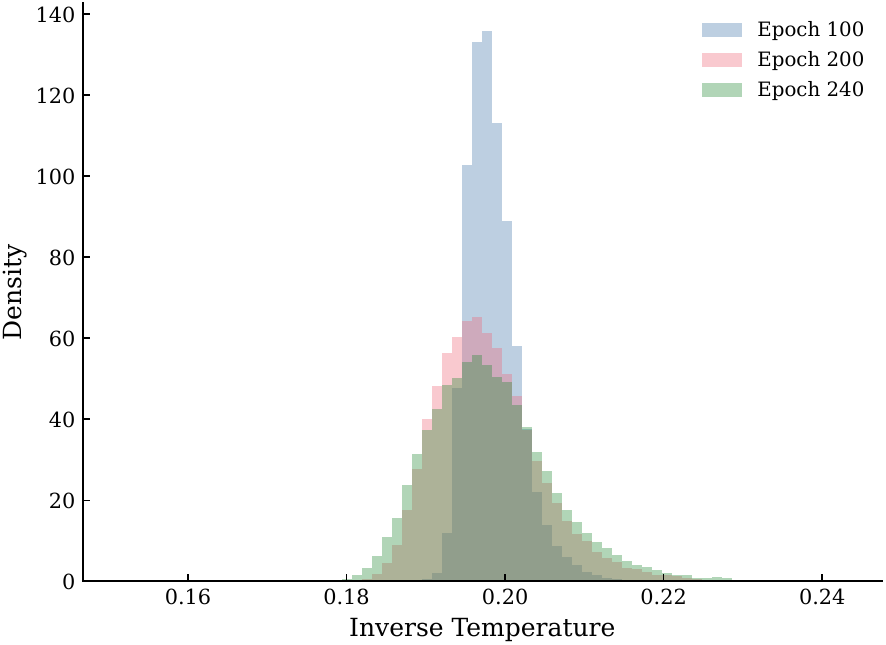}\\
(b) Distribution
\end{minipage}\hfill
\begin{minipage}[t]{0.31\textwidth}
\centering
\includegraphics[width=\linewidth]{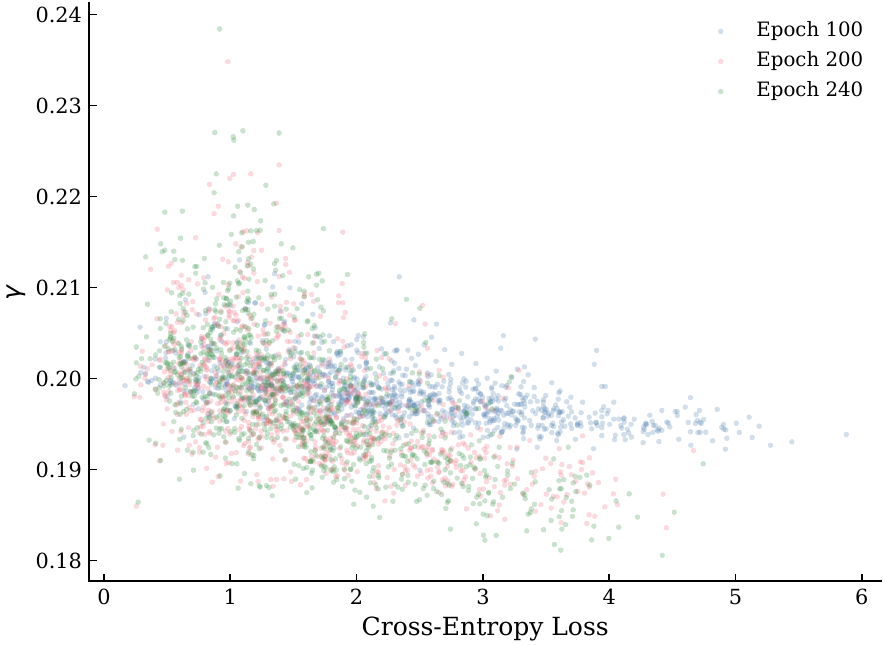}\\
(c) Difficulty correlation
\end{minipage}

\caption{Temperature diagnostics for TA-TTM on the representative same-architecture pair ResNet110$\rightarrow$ResNet20. The panels show the evolution of the mean and standard deviation of the inverse temperature $\gamma$, the stage-wise distribution of $\gamma$, and the correlation between per-sample cross-entropy loss and the optimized $\gamma$, respectively.}
\label{fig:temperature-diagnostics-tattm}
\end{figure*}

%% file: figs/fig_temperature_diagnostics_tawttm.tex
\begin{figure*}[!t]
\centering

\begin{minipage}[t]{0.31\textwidth}
\centering
\includegraphics[width=\linewidth]{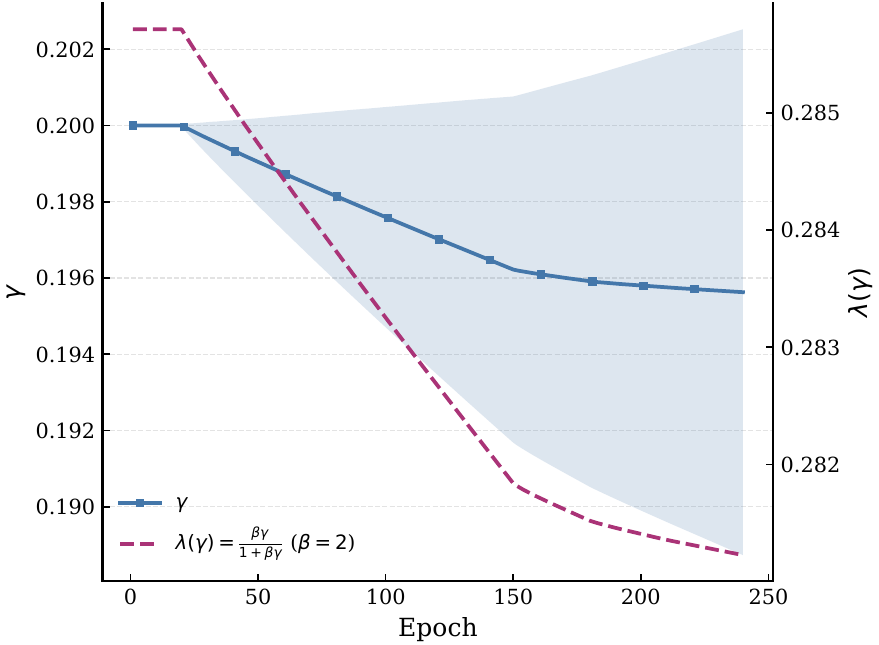}\\
(a) Dynamics
\end{minipage}\hfill
\begin{minipage}[t]{0.31\textwidth}
\centering
\includegraphics[width=\linewidth]{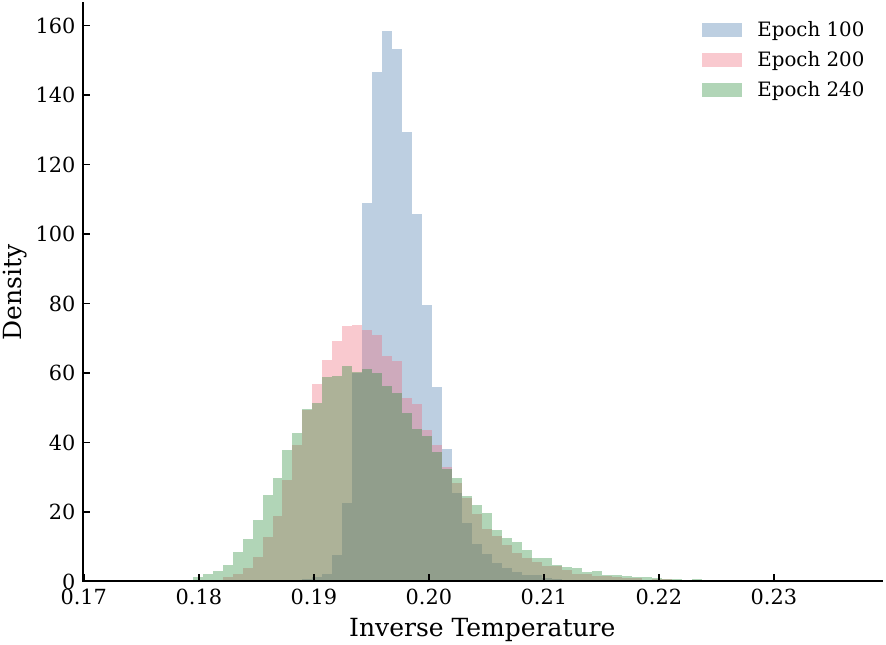}\\
(b) Distribution
\end{minipage}\hfill
\begin{minipage}[t]{0.31\textwidth}
\centering
\includegraphics[width=\linewidth]{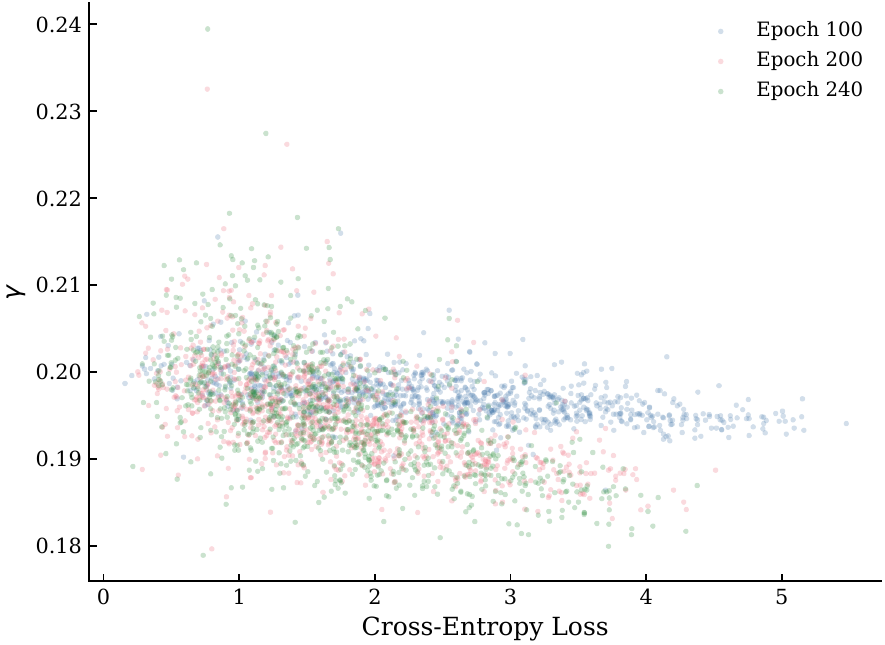}\\
(c) Difficulty correlation
\end{minipage}

\caption{Temperature diagnostics for TA-WTTM on ResNet110$\rightarrow$ResNet20, using the same layout as Fig.~\ref{fig:temperature-diagnostics-tattm}.}
\label{fig:temperature-diagnostics-tawttm}
\end{figure*}

%% file: tabs/tabs_temperature_design_ablation.tex
\begin{table*}[t!]
\centering
\small
\caption{Temperature-design ablations on CIFAR100 for WRN-40-2$\rightarrow$WRN-16-2. (a) compares inverse-temperature granularity using a common $\eta=0.001$ and reports top-1 best accuracy. (b) fixes sample-wise $\gamma$ and compares optimization rules using method-specific settings, reporting five-run top-1 best accuracy (mean $\pm$ std.).}
\begin{minipage}[t]{0.48\textwidth}
\centering
\textbf{(a) Temperature granularity}\\[3pt]
\setlength{\tabcolsep}{4pt}
\begin{tabular}{lccc}
\toprule
Method & Global $\gamma$ & Class $\gamma$ & Sample $\gamma$ \\
\midrule
TA-TTM  & 75.77 & 75.29 & \textbf{76.66} \\
TA-WTTM & 75.43 & 75.28 & \textbf{76.30} \\
\bottomrule
\end{tabular}
\end{minipage}\hfill
\begin{minipage}[t]{0.48\textwidth}
\centering
\textbf{(b) Sample-wise temperature optimization}\\[3pt]
\setlength{\tabcolsep}{2pt}
\begin{tabular}{lccc}
\toprule
Method & Trainable $\gamma$ & First-order & Newton \\
\midrule
TA-TTM  & $76.14_{\pm 0.17}$ & $75.88_{\pm 0.18}$ & \textbf{$76.42_{\pm 0.16}$} \\
TA-WTTM & $76.25_{\pm 0.23}$ & $75.77_{\pm 0.04}$ & \textbf{$76.57_{\pm 0.17}$} \\
\bottomrule
\end{tabular}
\end{minipage}
\label{tab:temperature_design_ablation}
\end{table*}

%% file: tabs/tabs_cifar100_wttm_ablation_short.tex
\begin{table}[t!]
\small
\setlength{\tabcolsep}{4pt}
\caption{Ablation study of the TA-WTTM temperature update on CIFAR100. We compare TA-WTTM, which uses the decoupled update, with the full-objective update on representative same- and different-architecture pairs. $\Delta$ denotes TA-WTTM minus TA-WTTM-full.}
\label{tab:cifar100_wttm_ablation}
\centering
\resizebox{\textwidth}{!}{%
\begin{tabular}{c c c c c c c}
\toprule
\multicolumn{1}{c}{Teacher}
& WRN-40-2 & ResNet110 & VGG13 & VGG13 & ResNet32$\times$4 & WRN-40-2 \\
\multicolumn{1}{c}{Student}
& WRN-16-2 & ResNet32 & VGG8 & MobileNetV2 & ShuffleNetV1 & ShuffleNetV1 \\
\midrule
TA-WTTM & $76.57_{\pm 0.17}$ & $74.04_{\pm 0.11}$ & $74.71_{\pm 0.08}$ & $68.84_{\pm 0.74}$ & $74.32_{\pm 0.23}$ & $75.72_{\pm 0.29}$ \\
TA-WTTM-full & $76.54_{\pm 0.23}$ & $73.64_{\pm 0.29}$ & $74.56_{\pm 0.24}$ & $68.74_{\pm 0.33}$ & $73.83_{\pm 0.60}$ & $75.53_{\pm 0.26}$ \\
$\Delta$ & $\textcolor{red!70!black}{+0.03}$ & $\textcolor{red!70!black}{+0.40}$ & $\textcolor{red!70!black}{+0.15}$ & $\textcolor{red!70!black}{+0.10}$ & $\textcolor{red!70!black}{+0.49}$ & $\textcolor{red!70!black}{+0.19}$ \\
\bottomrule
\end{tabular}}
\end{table}

%% file: figs/fig_hyperparameter_sensitivity.tex
\begin{figure*}[t!]
\centering
\begin{minipage}[t]{0.32\textwidth}
\centering
\includegraphics[width=\linewidth]{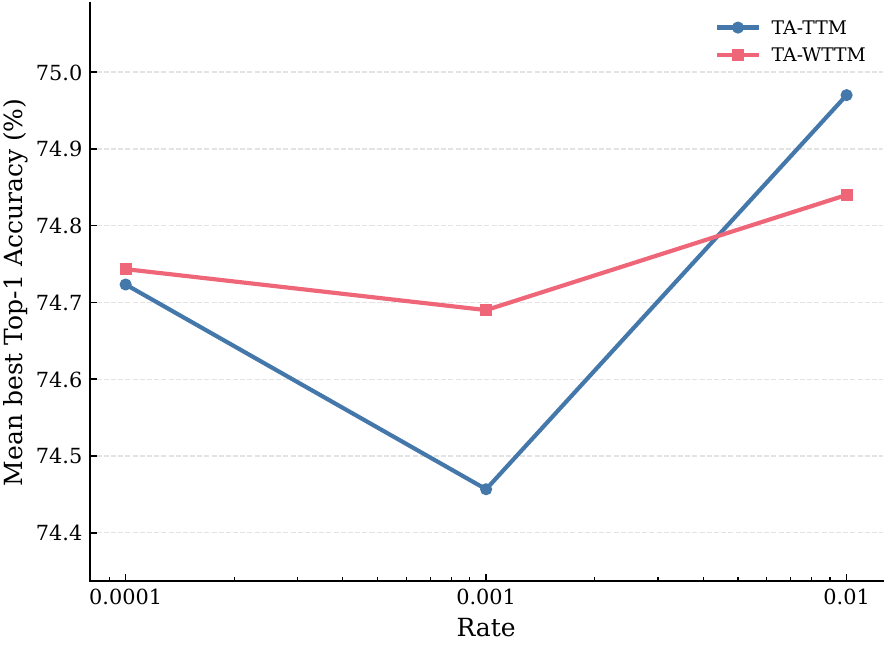}\\
(a) Update rate $\eta$
\end{minipage}\hfill
\begin{minipage}[t]{0.32\textwidth}
\centering
\includegraphics[width=\linewidth]{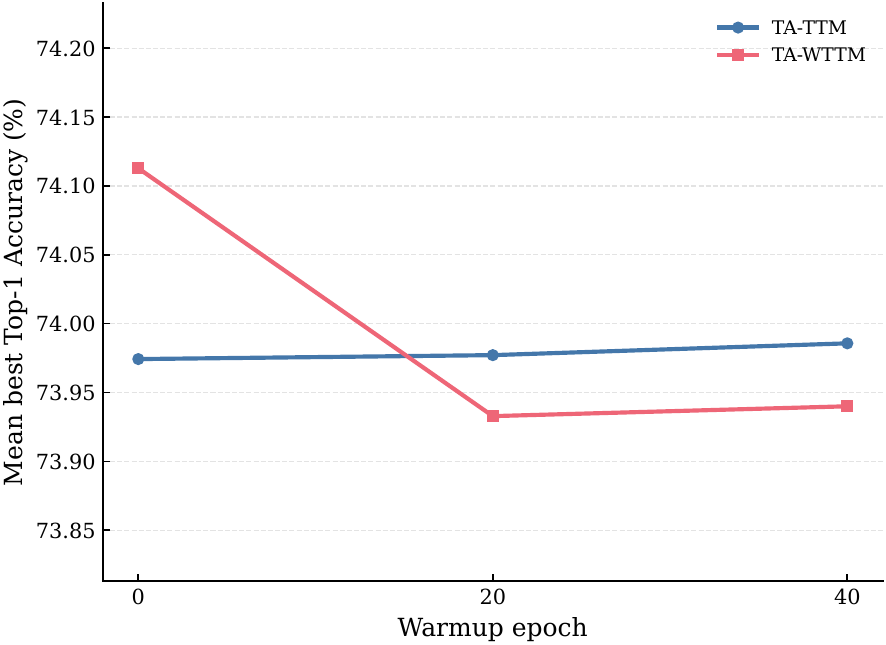}\\
(b) Warm-up length
\end{minipage}\hfill
\begin{minipage}[t]{0.32\textwidth}
\centering
\includegraphics[width=\linewidth]{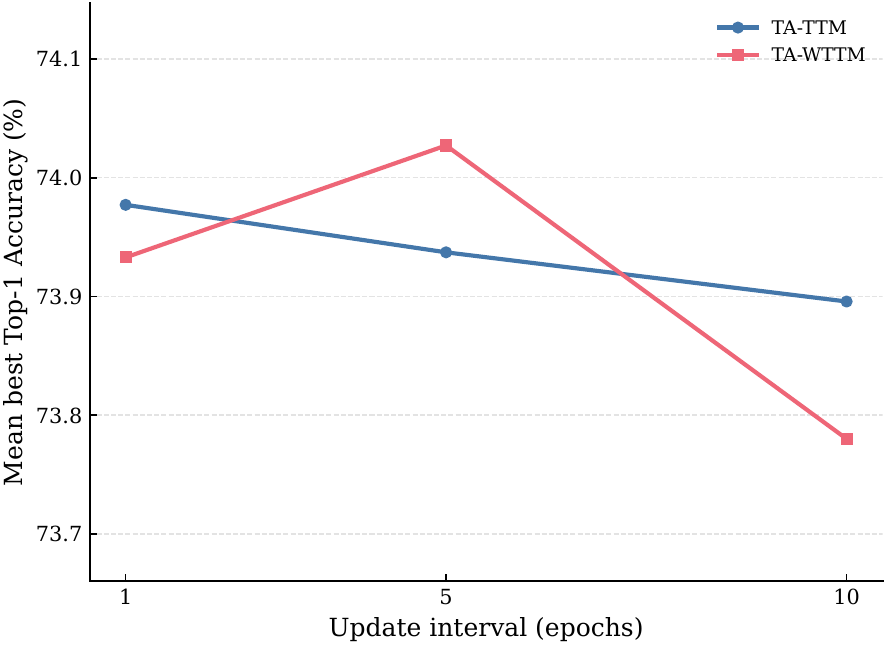}\\
(c) Update interval
\end{minipage}
\caption{Hyperparameter sensitivity of the proposed temperature adaptation. We evaluate the update rate $\eta$, warm-up length, and update interval for the sample-wise inverse temperature update.}
\label{fig:hyperparameter-sensitivity}
\end{figure*}

%% file: tabs/tabs_atd_comparison.tex
\begin{table*}[t!]
\centering
\small
\setlength{\tabcolsep}{8pt}
\caption{Comparison with ATD on CIFAR100 for WRN-40-2$\rightarrow$WRN-16-2. Results are Best Acc@1.}
\label{tab:atd_comparison}
\begin{tabular}{lccccc}
\toprule
Method & ATD & TTM & WTTM & TA-TTM & TA-WTTM \\
\midrule
Acc. & $75.83_{\pm 0.14}$ & $76.23_{\pm 0.15}$ & $76.37_{\pm 0.10}$ & \textbf{$76.42_{\pm 0.16}$} & \textbf{$76.57_{\pm 0.17}$} \\
\bottomrule
\end{tabular}
\end{table*}

%% file: figs/fig_temperature_diagnostics_cross_supp.tex
\begin{figure}[H]
\centering

\begin{minipage}[t]{0.31\textwidth}
\centering
\includegraphics[width=\linewidth]{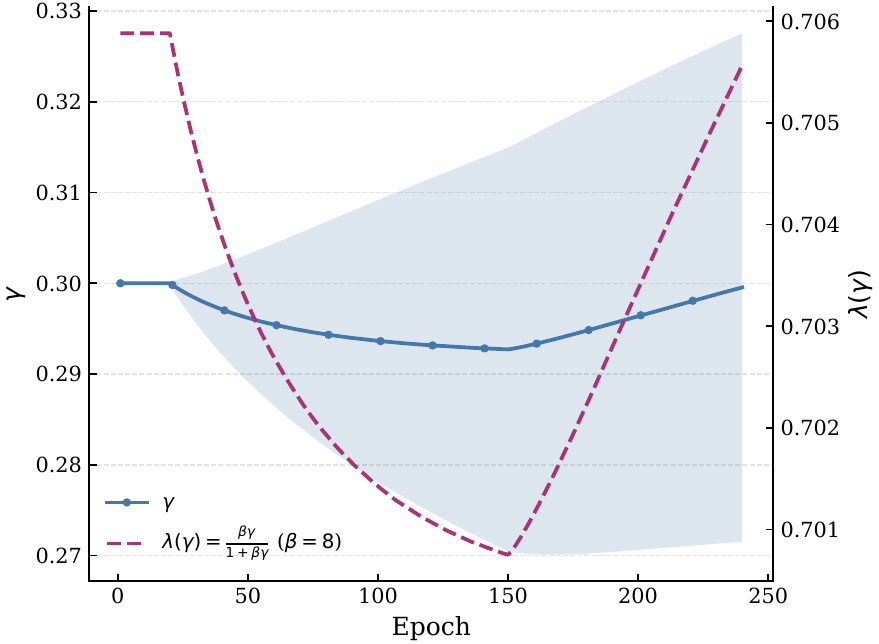}\\
(a) TA-TTM dynamics
\end{minipage}\hfill
\begin{minipage}[t]{0.31\textwidth}
\centering
\includegraphics[width=\linewidth]{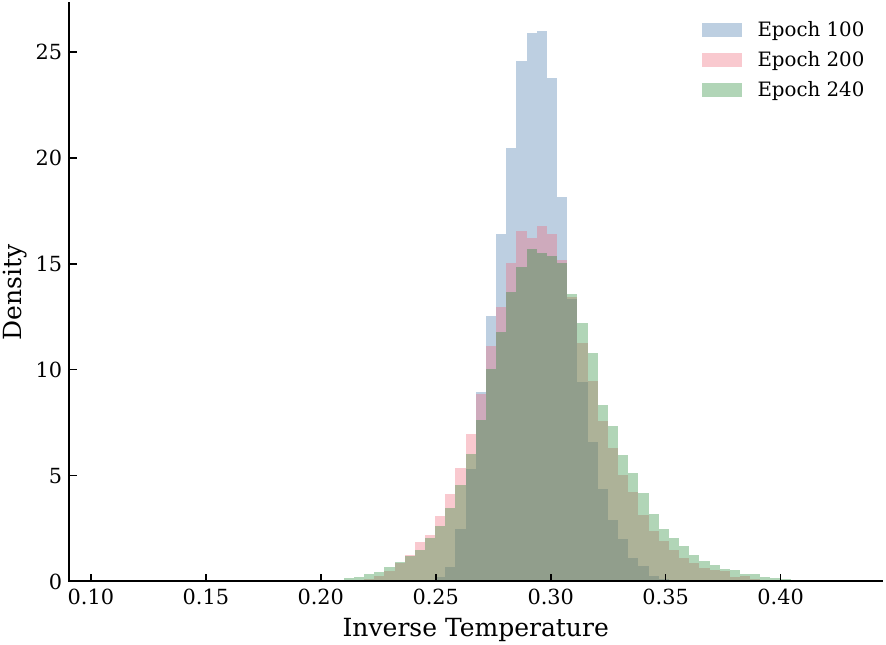}\\
(b) TA-TTM distribution
\end{minipage}\hfill
\begin{minipage}[t]{0.31\textwidth}
\centering
\includegraphics[width=\linewidth]{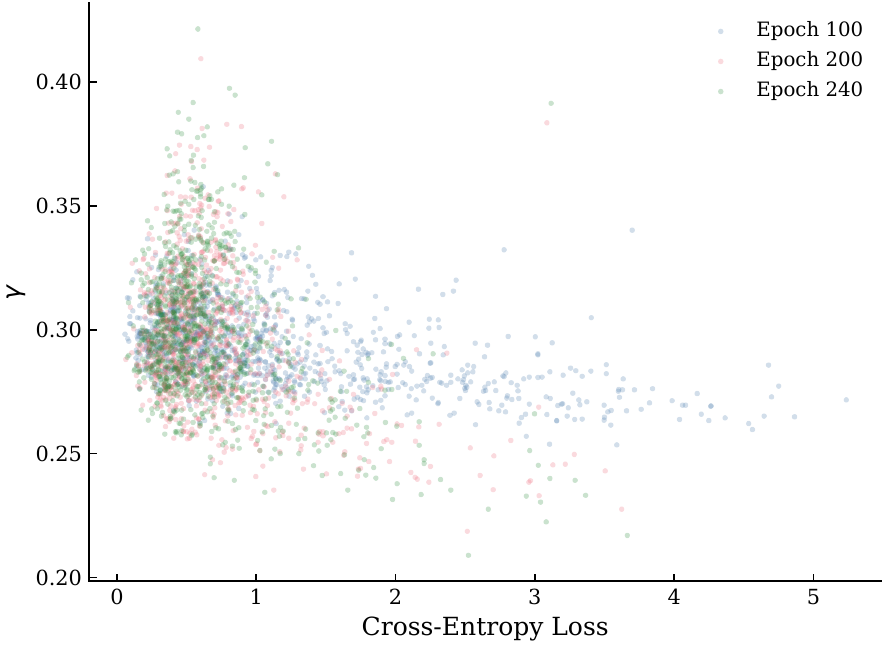}\\
(c) TA-TTM correlation
\end{minipage}

\vspace{0.35em}

\begin{minipage}[t]{0.31\textwidth}
\centering
\includegraphics[width=\linewidth]{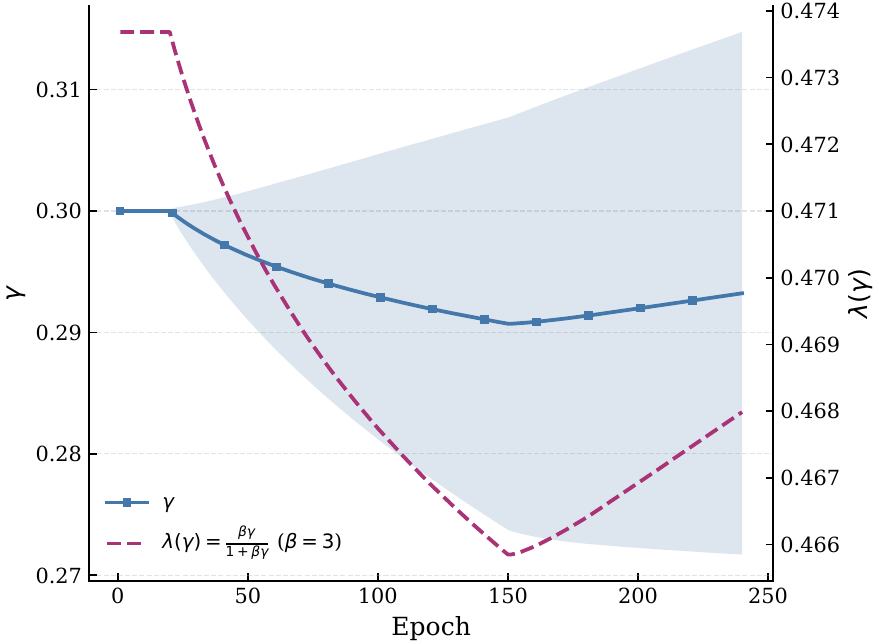}\\
(d) TA-WTTM dynamics
\end{minipage}\hfill
\begin{minipage}[t]{0.31\textwidth}
\centering
\includegraphics[width=\linewidth]{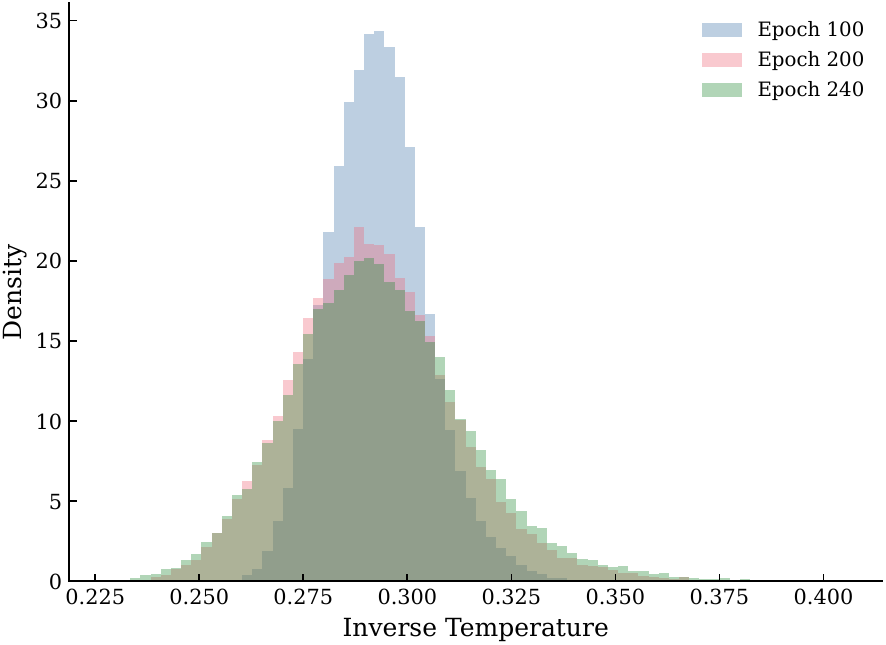}\\
(e) TA-WTTM distribution
\end{minipage}\hfill
\begin{minipage}[t]{0.31\textwidth}
\centering
\includegraphics[width=\linewidth]{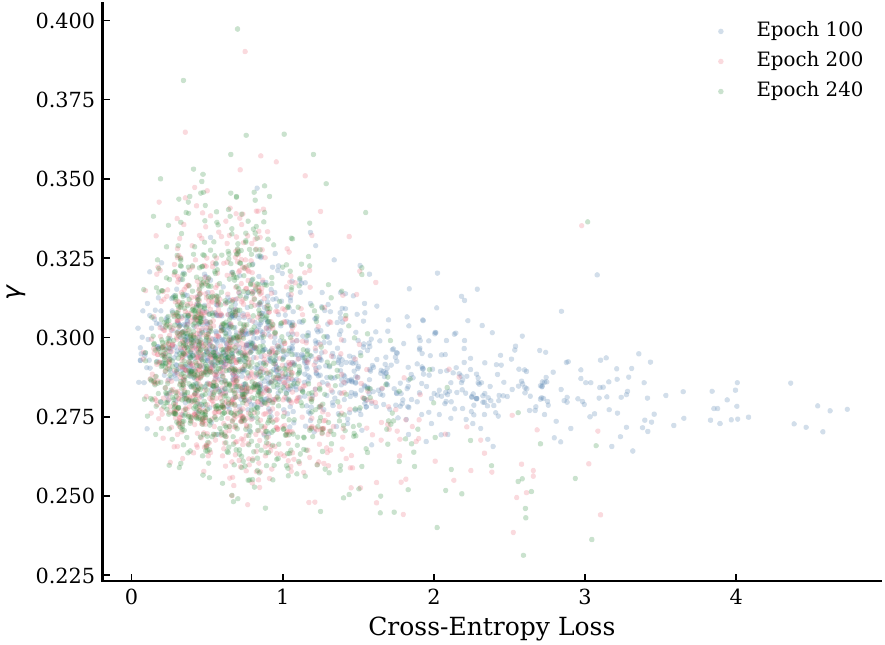}\\
(f) TA-WTTM correlation
\end{minipage}

\caption{Cross-architecture temperature diagnostics for WRN-40-2$\rightarrow$ShuffleNetV1. The first and second rows show TA-TTM and TA-WTTM, respectively; the columns show temperature dynamics, stage-wise inverse-temperature distributions, and correlations with per-sample cross-entropy loss.}
\label{fig:temperature-diagnostics-cross-supp}
\end{figure}

%% file: tabs/tabs_curvature_diagnostics.tex
\begin{table}[H]
\centering
\footnotesize
\setlength{\tabcolsep}{3pt}
\caption{Curvature diagnostics for WRN-40-2$\rightarrow$WRN-16-2, evaluated over all 50,000 training samples. The last three columns report the percentage of samples satisfying each condition.}
\label{tab:curvature_diagnostics}
\begin{tabular}{lrrrrrr}
\toprule
Method & Epoch & Mean $h_{\mathrm{update}}$ & Min. $h_{\mathrm{update}}$ & $h_{\mathrm{update}}<0$ & $|h_{\mathrm{update}}|<10^{-8}$ & $|h_{\mathrm{update}}|<10^{-3}$ \\
\midrule
TA-TTM  & 100 & 22.46 & $-35.14$ & 0.164\% & 0\% & 0\% \\
TA-TTM  & 200 & 16.79 & 1.47     & 0\%     & 0\% & 0\% \\
TA-TTM  & 240 & 16.77 & 1.83     & 0\%     & 0\% & 0\% \\
\midrule
TA-WTTM & 100 & 22.02 & $-32.06$ & 0.164\% & 0\% & 0.002\% \\
TA-WTTM & 200 & 17.03 & $-2.88$  & 0.004\% & 0\% & 0\% \\
TA-WTTM & 240 & 17.08 & 1.02     & 0\%     & 0\% & 0\% \\
\bottomrule
\end{tabular}
\end{table}

%% file: tabs/tabs_gamma_ce_correlation.tex
\begin{table}[H]
\centering
\small
\setlength{\tabcolsep}{8pt}
\caption{Correlation between the learned inverse temperature $\gamma$ and per-sample cross-entropy loss for WRN-40-2$\rightarrow$WRN-16-2, using 1,000 samples per epoch.}
\label{tab:gamma_ce_correlation}
\begin{tabular}{llccc}
\toprule
Method & Statistic & Epoch 100 & Epoch 200 & Epoch 240 \\
\midrule
TA-TTM  & Pearson  & $-0.673$ & $-0.687$ & $-0.713$ \\
        & Spearman & $-0.710$ & $-0.722$ & $-0.744$ \\
\midrule
TA-WTTM & Pearson  & $-0.673$ & $-0.645$ & $-0.683$ \\
        & Spearman & $-0.720$ & $-0.681$ & $-0.709$ \\
\bottomrule
\end{tabular}
\end{table}

%% file: tabs/tabs_hyperparams.tex
\begin{table*}[h!]
\centering
\caption{Hyperparameters used for each teacher--student pair.}
\label{tab:hyperparams}
\small
\setlength{\tabcolsep}{4pt}
\begin{tabular}{lcccc}
\toprule
Pair & Method & $\gamma_{\mathrm{init}}$ & $\beta$ & $\eta$ \\
\midrule
vgg13\_MobileNetV2 & TA-TTM & 0.2 & 16 & 0.005 \\
vgg13\_MobileNetV2 & TA-WTTM & 0.2 & 3 & 0.004 \\
\midrule
ResNet50\_MobileNetV2 & TA-TTM & 0.2 & 20 & 0.004 \\
ResNet50\_MobileNetV2 & TA-WTTM & 0.2 & 5 & 0.001 \\
\midrule
ResNet50\_vgg8 & TA-TTM & 0.1 & 70 & 0.005 \\
ResNet50\_vgg8 & TA-WTTM & 0.1 & 2 & 0.005 \\
\midrule
resnet32x4\_ShuffleNetV1 & TA-TTM & 0.2 & 12 & 0.005 \\
resnet32x4\_ShuffleNetV1 & TA-WTTM & 0.2 & 1.4 & 0.005 \\
\midrule
resnet32x4\_ShuffleNetV2 & TA-TTM & 0.4 & 40 & 0.004 \\
resnet32x4\_ShuffleNetV2 & TA-WTTM & 0.4 & 16 & 0.001 \\
\midrule
WRN-40-2\_ShuffleNetV1 & TA-TTM & 0.3 & 8 & 0.004 \\
WRN-40-2\_ShuffleNetV1 & TA-WTTM & 0.3 & 3 & 0.003 \\
\midrule
WRN-40-2\_WRN-16-2 & TA-TTM & 0.1 & 101 & 0.004 \\
WRN-40-2\_WRN-16-2 & TA-WTTM & 0.1 & 4 & 0.003 \\
\midrule
WRN-40-2\_WRN-40-1 & TA-TTM & 0.1 & 76 & 0.003 \\
WRN-40-2\_WRN-40-1 & TA-WTTM & 0.1 & 3 & 0.005 \\
\midrule
resnet56\_resnet20 & TA-TTM & 0.3 & 7 & 0.002 \\
resnet56\_resnet20 & TA-WTTM & 0.3 & 1.5 & 0.003 \\
\midrule
resnet110\_resnet20 & TA-TTM & 0.2 & 8 & 0.001 \\
resnet110\_resnet20 & TA-WTTM & 0.2 & 2 & 0.001 \\
\midrule
resnet110\_resnet32 & TA-TTM & 0.1 & 33 & 0.002 \\
resnet110\_resnet32 & TA-WTTM & 0.1 & 1.5 & 0.002 \\
\midrule
resnet32x4\_resnet8x4 & TA-TTM & 0.1 & 100 & 0.002 \\
resnet32x4\_resnet8x4 & TA-WTTM & 0.1 & 3 & 0.005 \\
\midrule
vgg13\_vgg8 & TA-TTM & 0.1 & 45 & 0.005 \\
vgg13\_vgg8 & TA-WTTM & 0.1 & 2.25 & 0.002 \\
\bottomrule
\end{tabular}
\end{table*}